\newcommand{\footnoteremember}[2]{\footnote{#2}\newcounter{#1}\setcounter{#1}{\value{footnote}}}
\newcommand{\footnoterecall}[1]{\footnotemark[\value{#1}]}
\newcommand\bSI[1]{{\small[\SI{}{#1}]}}
\newlength\unitwdth
\newlength\numwdth
\newlength\tdima
\newcommand\SIdescr[2]{%
	\setlength\tdima{\linewidth}%
	\addtolength\tdima{\@totalleftmargin}%
	\addtolength\tdima{-\dimen\@curtab}%
	\addtolength\tdima{-\unitwdth}%
	\addtolength\tdima{-\numwdth}%
	\parbox[t]{\tdima}{%
		#1
		\leaders\hbox{$\m@th\mkern \@dotsep mu\hbox{\tiny.}\mkern \@dotsep mu$}%
		\hfill
		\ifhmode\strut\fi
		\makebox[0pt][l]{%
			\makebox[\unitwdth][l]{}%
			\makebox[\numwdth][r]{#2}}}}
\newcommand{\Z}{\mathbb{Z}}
\newcommand{\N}{\mathbb{N}}
\newcommand{\R}{\mathbb{R}}
\newcommand{\eps}{\epsilon}
\newcommand{\Realization}{\mathrm{R}}
\let\emptyset\varnothing
\newcommand{\RestrictedNN}[2]{{\mathrm{R}(\Phi_{#1})({#2})}}
\newcommand{\erisk}{\widehat{\mathcal{R}}}
\newtheorem{theorem}{Theorem}[section]
\newtheorem*{theorem*}{Theorem}
\newtheorem{remark}[theorem]{Remark}
\newtheorem{definition}[theorem]{Definition}
\newtheorem{proposition}[theorem]{Proposition}
\newtheorem{lemma}[theorem]{Lemma}
\newtheorem{assumption}[theorem]{Assumption}
\newtheorem*{remark*}{Remark}
\newtheorem*{proposition*}{Proposition}
\numberwithin{equation}{section}
\DeclareMathOperator{\suppp}{supp \,}
\newcommand{\pp}[1]{#1}
\newcommand{\vk}[1]{#1}
\newcommand{\ck}[1]{#1}
\definecolor{darkcandyapplered}{rgb}{0.64, 0.0, 0.0}
\newlength\mylen
\title{Limitations of neural network training due to numerical instability of backpropagation%
}
\author{Clemens Karner~\footnoteremember{UniVieFakMath}{University of Vienna,
    Faculty of Mathematics.
	Kolingasse 14--16, 1090 Vienna, Austria.
    \texttt{\{clemens.karner,vladimir.kazeev,philipp.petersen\}@univie.ac.at}.
        VK was partially supported by the Austrian Science Fund (FWF) under project F65
        ``Taming Complexity in Partial Differential Systems''.
    }
	\and
	Vladimir Kazeev~\footnoterecall{UniVieFakMath}~\footnoteremember{UniVieDataSci}{University of Vienna,
    Research Network Data Science.
	Kolingasse 14--16,
    1090 Vienna, Austria.}
	\and
	Philipp Christian Petersen~\footnoterecall{UniVieFakMath}~\footnoterecall{UniVieDataSci}
}
\begin{document}

	\maketitle

	\begin{abstract}
		We study the training of deep neural networks by gradient descent where floating-point arithmetic is
		used to compute the gradients.
		In this framework and under realistic assumptions, we demonstrate that it is \emph{highly unlikely} to find ReLU neural networks that maintain, in the course of training with gradient descent, \emph{superlinearly} many affine pieces with respect to their number of layers.
		In virtually all approximation theoretical arguments which yield high order polynomial rates of approximation, sequences of ReLU neural networks with \emph{exponentially} many affine pieces compared to their
		numbers of layers are used.
		As a consequence, we conclude
		that approximating sequences of ReLU neural networks resulting from gradient descent in practice differ substantially from theoretically constructed sequences.
		The assumptions and the theoretical results are compared to a numerical study, which yields concurring results.


	\end{abstract}
	
	\par\medskip\noindent
	\textbf{Keywords:} : Round-off errors, deep neural networks, gradient descent, numerical stability.
	\par\smallskip\noindent
	\textbf{Mathematics Subject Classification:} 35A18, 65T60, 68T10.

	\section{Introduction}
Deep learning is a machine learning technique based on artificial neural networks which are trained by gradient-based methods and which have a large number of layers. This technique has
been tremendously successful in a wide range of applications~\cite{lecun2015deep, krizhevsky2012imagenet,silver2016mastering,senior2020improved}. 
Of particular interest for applied mathematicians are recent developments in which deep neural networks are applied to tasks of numerical analysis
such as the numerical solution of inverse problems~\cite{arridge2019solving,ongie2020deep, li2020nett, jin2017deep, pineda2022deep}
or of (parametric) partial differential equations~\cite{weinan2018deep,han2018solving,raissi2019physics,grohs2018proof,schwab2019deep,kutyniok2019theoretical,lu2021learning, bhattacharya2020model}.

The appeal of deep neural networks for these applications is due to their exceptional efficiency in representing functions
from several approximation classes that underlie well-established numerical methods.
In terms of approximation accuracy with respect to the number of approximation parameters,
deep neural networks have been theoretically proven to achieve approximation rates that are at least as good as those of finite elements~\cite{LinearFEMReLU, OPS19_811, marcati2020exponential},
local Taylor polynomials or splines~\cite{Yarotsky2017, ReLUSobolev},
wavelets~\cite{shaham2018provable} and, more generally, affine systems~\cite{boelcskeiNeural}.

In the sequel, we consider neural networks with the rectified-linear-unit (ReLU) activation function, which is standard in most applications.
In this case, the neural-network approximations are piecewise-affine functions. 
We point out that all state-of-the-art results on the rates of approximation with deep ReLU neural networks
that achieve higher order polynomial approximation rates
are based on explicit constructions 
with the number of affine pieces growing exponentially with respect to the number of layers;
see, e.g.,~\cite{Yarotsky2017,Telgarsky2015RepresentationBO}.

In this work, we argue that this central building block, functions with exponentially many affine pieces, \emph{cannot be learned with the state-of-the-art techniques}. 
We show theoretically that training in floating-point arithmetic is hindered by a bound on the number of affine pieces created during an iterative learning process. 
This bound is polynomial with respect to the number of approximation parameters and
also with respect to
the number of iterations.
Notably,
this non-computability of functions with exponentially many affine pieces is not derived from an abstract result of computability theory,
as the related results established in~\cite{boche2022limitations,colbrook2022difficulty};
instead, we identify a concrete reason why gradient descent based on backpropagation cannot produce these functions 
in floating-point arithmetic. The effect of numerical instability is also demonstrated in numerical examples. We stress that our results do \emph{not} imply that theoretically derived approximation rates cannot be realized with learned functions. Instead, we can merely conclude that the approximating sequences of neural networks found in practice must be fundamentally different than the theoretically derived ones.
We will give a detailed description of our findings in Subsection \ref{sec:contribution}.

This work is strongly influenced by~\cite{Hanin2019}, which shows that
functions with exponentially many affine pieces with respect to the number of layers typically do not appear in randomly initialised neural networks. 
The ideas presented in that work form the basis of our analysis. 

Before presenting our main results, we recapitulate the notion of floating-point arithmetic and furnish an example illustrating how
the finite precision of floating-point arithmetic can undermine learning due to the phenomenon known in numerical analysis as
\emph{catastrophic cancellation}.

\subsection{Floating-point arithmetic}\label{sec:floatingpointarithmetic}

Computations are performed almost exclusively in binary floating-point arithmetic, which
consists in restricting the arithmetic of real numbers to a discrete set of the form  
\begin{align}\label{eq:defOfM}
\mathbb{M}
= \big\{\pm 2^e \sum_{k=0}^{p} 2^{-k} c_k \colon c_0 = 1, \, c_1,\ldots,c_s \in \{0,1\}, \, e \in \{e_{\mathrm{min}}, \ldots, e_{\mathrm{max}}\} \big\},
\end{align}
extended with a few special elements, such as zero, infinity and NaN elements.
Here, the radix of the floating-point arithmetic is two, the
parameter $p\in\N$ is the precision and $e_{\mathrm{min}},e_{\mathrm{max}}\in\Z$ are the minimum and maximum exponents.
Naturally, $\mathbb{M}$ is not closed under basic arithmetic operations such as addition and multiplication, which necessitates rounding.
The round-to-nearest addition and multiplication are defined so that
\begin{equation}\label{eq:floating-point-ops}
	|a +_{\mathbb{M}} b - (a+b)| = \min_{s \in \mathbb{M}} |s-(a+b)|
	\, ,
	\qquad
	|a \times_{\mathbb{M}} b - a \cdot b| = \min_{s \in \mathbb{M}} |s-ab|
\end{equation}
for all $a,b \in \mathbb{M}$.
The precision $p$ and the above rounding together
define the so-called \emph{machine epsilon}
$\epsilon = \frac12 2^{1-p}$,
which is an accurate upper bound for the relative accuracy of the floating-point addition and multiplication:
\begin{equation}
	\label{eq:machine-eps-accuracy}
	| a +_{\mathbb{M}} b - (a + b)| \leq \epsilon \, | a + b |
	\quad\text{and}\quad
	| a \times_{\mathbb{M}} b - a \cdot b | \leq \epsilon \, | a \cdot b |
\end{equation}
for all $a,b \in \mathbb{M}$.
For the double- and single-precision IEEE~754 standards,
we have $\epsilon = 2^{-53} \approx 1.11 \cdot 10^{-16}$ and $\epsilon = 2^{-24} \approx 5.96 \cdot 10^{-8}$ respectively.
Note that computation in floating point arithmetic amounts to introducing a \emph{relative error} in each computation.

\subsection{Instability of deep neural networks due to catastrophic cancellation}\label{sec:firstexample}

The bounds~\eqref{eq:machine-eps-accuracy} allow for floating-point operations to be modelled
as perturbations of the respective infinite-precision operations of the real arithmetic.

Consider a feed-forward ReLU neural network with $L\in\N$ layers
with $d = N_0 \in \N$ real inputs
and $N_j \in \N$ neurons in each layer $j\in\N$. The evaluation
$\phi \colon \R^d \to \R^{N_L}$
of such a neural network
at a point $x^{(0)} \in \R^{d}$
consists of iteratively applying the transformations
\begin{align}
	\label{eq:simplifiedneuralNetwork}
	x^{(j)} = \varrho \, ( A_j \, x^{(j-1)} + b_j )
\end{align}
sequentially for $j  = 1,\ldots,L$,
which produces the corresponding output $\phi(x^{(0)}) = x^{(L)} \in \R^{N_L}$.
Here, $\varrho \colon \R \to \R$ given by $\varrho(x) = \max \{x,0\}$ for all $x \in \R$
is the ReLU activation function,
which is applied to the elements of $\R^{N_j}$ with each $j\in\{1,\ldots,L\}$
componentwise, whereas
$A_j \in \R^{N_{j-1} \times N_j}$ and $b_j \in \R^{\N_j}$ with $j\in\{1,\ldots,L\}$
are the weight matrices and bias vectors.

For simplicity, let us consider the case of $b_j = 0$ for all $j\in\{1,\ldots,L\}$
and assume that $x = x^{(0)} \in \R^d$ and $A_1,\ldots,A_L$ are such that all entries of $x^{(j)}$ are nonnegative
for each $j \in\{1,\ldots,L\}$.
Then
\begin{equation}
	\label{eq:nn-mat-prod}
	\phi(x) = A_L \cdots A_1 x,
\end{equation}
i.e., the evaluation of the network reduced to the multiplication of the input
by a matrix product consisting of $L$ factors.

The instability of such products for large $L$ was demonstrated in the context of
tensor decompositions,
in~\cite[Example~3]{BK:2020:StabPrec}:
the first component of the tensor considered therein
yields the matrix product~\eqref{eq:nn-mat-prod}
with $N_0=N_L = 1$, $N_1=\cdots=N_{L-1}=2$
and with weight matrices
\begin{align}\label{eq:example}
	A_1
	=
	\begin{pmatrix}
		(1+\varepsilon_1) \, a \,{} \\
		a
	\end{pmatrix}
	, \qquad
	A_L =
	\begin{pmatrix}
		(1+\varepsilon_L) (1+a^{-L}) \, a&-a \,{}
	\end{pmatrix}
	\quad\text{and}\quad
	A_{j}
	=
	\begin{pmatrix}
		(1+\varepsilon_j) \,a &0 \,{}\\
		0&a \,{}
	\end{pmatrix}
\end{align}
for each $j = 2, \dots, L-1$,
where $a > 1$ is a fixed parameter and $\varepsilon_1,\ldots,\varepsilon_L \geq 0$
are perturbation parameters.
For all $j \in \{1,\ldots,L\}$,
the case of
$\varepsilon_1 =\cdots=\varepsilon_{j-1}=\varepsilon_{j+1}=\cdots=\varepsilon_L=0$
and $\varepsilon_j \geq 0$
gives
$x^{(L)}(\varepsilon) = \phi(x^{(0)}) = 1+\varepsilon_j \, (a^L+1) \, x^{(0)}$ for any $x^{(0)}\in \R^d$.
Considering $\varepsilon$ as a perturbation parameter,
which may be at the level of
the machine epsilon $\epsilon$,
we observe that a relative perturbation of magnitude $\varepsilon$
in the $j$th weight matrix leads to the error
\[
x^{(L)}(\varepsilon) - x^{(L)}(0)
=
\varepsilon \, (a^L+1) \, x^{(L)}(0)
\, ,
\]
which implies the total loss of accuracy for any $\varepsilon>0$
whenever $a^L \gtrsim \varepsilon^{-1}$.
This effect is an immediate consequence
of the subtraction encoded by $A_L$, in which the two terms may be perturbed individually
(just as the first is perturbed by a multiplicative factor of $1+\varepsilon$ and
the second is not perturbed at all in the above example).
This is an archetypal example of the so-called \emph{catastrophic cancellation};
see, e.g.,~\cite[Section~1.7]{Higham:AccuracyAndStability-2nd}.

This example, based on inaccurate matrix multiplication,
can be generalised into the following statement for deep neural networks.
The precise statement is given as Proposition~\ref{prop:unstablenetComplete} in the Appendix.

\begin{proposition}\label{prop:unstablenetsimple}
	Let $L\in \N$ and $N \in \N$.  Let $a = 2^r \geq  1$ for $r \in \Z$ and let $\mathbb{M}$ be as in \eqref{eq:defOfM},
	{with $e_{\min}\leq 0$, $e_{\max} > 53+r$, $p= 53$} and hence $\eps = 2^{-53}$.
	If $(L-3)\log_{2}(N-1) + (L-1)\log_{2}(a-2\eps) \geq 54$, then there is a neural network $\Phi_{a, N,L}$ with $L$ layers, $N$ neurons per layer, and all weights bounded in absolute value below by 1 and above by $a$ such that  
	$$
	|\Phi_{a, N,L}(x) - \overline{\Phi_{a, N,L}(x)}| = |\Phi_{a, N,L}(x)|= |x|
	$$
	for all $x \geq 0$, where $\overline{\Phi_{a, N,L}(x)}$ is the neural network $\Phi_{a, N,L}$ evaluated in floating-point arithmetic with the machine epsilon $\eps$.
\end{proposition}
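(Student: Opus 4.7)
The strategy is to extend the two-neuron construction of~\eqref{eq:example} to networks of width $N$, while preserving the catastrophic-cancellation mechanism that causes its floating-point evaluation to fail. Concretely, I would split the $N$ hidden neurons at each layer into a ``wide'' block of $N-1$ neurons and a ``thin'' block of $1$ neuron. In the wide block, each neuron in layer $j$ is the weighted sum of all $N-1$ neurons in layer $j-1$ with weights drawn from $[1,\lambda]$, so that the carried value is multiplied by a factor on the order of $(N-1)\lambda$ per layer; in the thin block, the single neuron in layer $j$ is $\lambda$ times the thin neuron of layer $j-1$. After the hidden layers the two blocks carry, for input $x$, values of orders $(N-1)^{L-3}\lambda^{L-1}\,x$ and $\lambda^{L-1}\,x$ respectively. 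The output layer then performs a subtraction analogous to the matrix $A_L$ of~\eqref{eq:example}, tuned so that the network evaluates exactly to $x$ in infinite precision, with all weights having absolute values in $[1,\lambda]$.

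With the construction in place I would first verify the exact-arithmetic behaviour: for $x\ge 0$ all pre-activations are nonnegative along the network, so the ReLU activation acts as the identity, and the tuned cancellation in the last layer yields $\Phi_{\lambda,N,L}(x)=x$.

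Next I would analyse the floating-point evaluation. The bounds~\eqref{eq:machine-eps-accuracy} imply that each floating-point multiplication and each addition introduces a perturbation of relative size at most $\eps$, and that storing the weights themselves may introduce a further relative perturbation of size at most $\eps$. Propagating these perturbations through the multiplications and the summations in the wide block produces a computed intermediate value whose absolute deviation from the exact value is bounded below by a constant multiple of $\eps\cdot(N-1)^{L-3}(\lambda-2\eps)^{L-1}\cdot x$. By the hypothesis $(L-3)\log_{10}(N-1)+(L-1)\log_{10}(\lambda-2\eps)\ge 16$, this deviation is at least $|x|$. After the final layer subtracts the wide and thin contributions, the pre-activation of the output neuron therefore falls below zero, and the ReLU nonlinearity maps it exactly to $0$. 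Consequently $\overline{\Phi_{\lambda,N,L}(x)}=0$, and the claimed equalities follow immediately, since $\Phi_{\lambda,N,L}(x)=x\ge 0$.

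The main obstacle, I expect, is the combined bookkeeping needed to (i)~respect the weight-magnitude constraint $|w|\in[1,\lambda]$ at every weight, including those of the cancelling output layer, and (ii)~orient the sign of the accumulated perturbation so that the post-subtraction value is genuinely negative rather than merely inaccurate, ensuring that the ReLU nonlinearity actually zeros it out instead of passing through a large but nonzero error. The original two-neuron construction already sits near the weight bound with its entry $(1+\lambda^{-L})\lambda$, and the $N$-neuron generalisation carries an additional factor of $(N-1)^{L-3}$ that must be absorbed into the output-layer weights; the scaling must be chosen delicately so that every entry of $A_L$ remains in $[1,\lambda]$ in absolute value while the wide-block intermediate value is still large enough to trigger catastrophic cancellation at precision $\eps$.
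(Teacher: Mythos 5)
Your high-level architecture (a wide block amplifying the signal by a factor of order $(N-1)\lambda$ per layer next to a thin block, followed by a cancelling output layer) matches the paper's construction of $\Phi_{\lambda,N,L}$, but the mechanism you propose for forcing $\overline{\Phi_{\lambda,N,L}(x)}=0$ does not work, and the two obstacles you flag yourself are real and unresolved. You argue that the accumulated rounding error in the wide block is \emph{bounded below} by a multiple of $\eps\,(N-1)^{L-3}(\lambda-2\eps)^{L-1}x$, so that the final subtraction lands strictly below zero, after which ``the ReLU nonlinearity maps it exactly to $0$.'' Neither half survives scrutiny: round-to-nearest errors admit no general lower bound (every intermediate sum could be exactly representable, in which case the error is zero and the network outputs $x$ correctly), and in the model of \eqref{eq:NetworkScheme} and \eqref{eq:NetworkSchemeFinitePrecision} the last layer applies \emph{no} activation function, so even a genuinely negative computed value would not be clamped to $0$. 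The paper uses the hypothesis in the opposite direction: it \emph{upper}-bounds the per-layer rounding loss (whence the factor $(\lambda-2\eps)^{L-3}$) to show that the wide-block value $y$ at layer $L-2$ satisfies $y\ge \lambda(N-1)^{L-3}(\lambda-2\eps)^{L-3}x$, so large that the floating-point sum $x+_{\mathbb{M}}\lambda y$ rounds to exactly $\lambda y$ (absorption of $x$, since $x<\lambda y\,\eps/2$); the final layer $A_L=(1,-1)$ then subtracts two bitwise-identical floats and returns exactly $0$, with no sign analysis and no output ReLU needed.

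The weight-constraint obstacle you identify is moreover fatal to your version of the output layer. If the thin block carries $\lambda^{L-1}x$ and the wide block carries $(N-1)^{L-3}\lambda^{L-1}x$, then output weights $w_1,w_2$ with $w_1\cdot(N-1)^{L-3}\lambda^{L-1}-w_2\cdot\lambda^{L-1}=1$ and $w_1\ge 1$ force $w_2\gtrsim(N-1)^{L-3}$, which exceeds $\lambda$ except in degenerate cases. The paper sidesteps this entirely: the thin neuron carries $x$ unchanged (weight $1$ in every layer), the penultimate matrix $A_{L-1}=\bigl(\begin{smallmatrix}1&\lambda\\0&\lambda\end{smallmatrix}\bigr)$ produces the pair $(x+\lambda y,\ \lambda y)$, and $A_L=(1,-1)$ cancels them, so exactness in real arithmetic is immediate and every nonzero weight is $1$, $-1$ or $\lambda$. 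You would need to adopt both devices --- the unscaled thin channel feeding an add-then-subtract pair, and the absorption argument in place of an error lower bound --- for the proof to go through.
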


In the finite-precision setting, we observe that the evaluation of very small neural networks can already lead to a relative error of $1$ compared to the neural network with arbitrary precision. 
Admissible parameters such that the conclusion of Proposition \ref{prop:unstablenetsimple} holds are, for example, $N = 65$, $a = 10$, and $L = 8$.

Proposition \ref{prop:unstablenetsimple} considers only the forward application of a neural network.
However, the construction of this neural network is based on the example \eqref{eq:example} and is such that the neural network is entirely linear.
By the backpropagation rule, recalled in Definition \ref{def:GD},
the reversal of the neural network constructed in Proposition \ref{prop:unstablenetsimple} yields a neural network
such that
\emph{the accurate computation of the gradient of the weights with respect to the output of the neural network is infeasible in floating-point arithmetic}.

\subsection{Contribution}\label{sec:contribution}
In this work, we analyse the effect of inaccurate computations stemming from floating-point arithmetic in a more moderate regime than that of Proposition \ref{prop:unstablenetsimple}. In contrast to the error amplification, that was the basis of Example \ref{eq:example}, we assume the errors in the following to stem mostly from \emph{accumulation} of many inaccurate operations.  
	We assume that, in the computation of the gradient of a neural network, in each layer, a small and controlled relative error will appear. 
	Under this assumption, we observe
that neural networks trained with gradient descent will, with high probability, \emph{not} exhibit exponentially many affine pieces with respect to the number of layers.

We describe a framework for learning in floating-point arithmetic in Section 2. There, we define a gradient descent procedure where the gradient is computed with noisy updates.

To facilitate our main results, we make two assumptions on the gradient descent procedure (Assumptions \ref{assum:a} and \ref{assum:b}), which can be intuitively formulated as follows.

\begin{itemize}
	\item[(A)] We assume that the average of the reciprocals of the non-zero bias updates in an iteration is bounded by a polynomial in the maximum number $N$ of neurons in a single layer. 
	Moreover, every neuron with zero output (\emph{dead neuron}) is assumed to remain so after the iteration.
	Both assumptions together essentially require the gradient of every bias to be either zero or not too small for most neurons.
	We verify this assumption numerically in Section \ref{sec:numExp}.
	
	\item[(B)] For all layers $j \in \N$, the derivative of each coordinate of the preactivations, i.e., $A_j \, x^{(j-1)} + b_j$, with respect to the input of the neural network is bounded by a uniform constant.
\end{itemize}

Under these assumptions, we prove Theorem~\ref{thm:upperBoundOnPieces}, which can be intuitively phrased as follows: the expected number of affine pieces that the neural network has
after a single perturbed gradient descent update has an upper bound that is polynomial (in practice quadratic) in the number of neurons, linear in the number of layers and inversely proportional to the level of perturbation.

The result is based on the following insight of \cite{Hanin2019}. Consider as a prototypical neuron a function of the form $\varrho(h - b)$, where $h$ is a piecewise affine function of bounded variation. Then, for $\varrho(h - b)$ to have more affine pieces than $h$ it is necessary that $h(x) = b$ for some $x$ in the domain of $h$. Intuitively, for many such $x$ to exist, $h-b$ has to change sign many times. If $b$ is a random variable, then, for the event $h(x) = b$ to happen with high probability, $h$ needs to repeatedly cross a substantial part of the region where $b$ has most of its mass. This, however, can only happen to some degree since the variation of $h$ was assumed to be bounded. The resulting estimate of expected newly generated affine pieces in one neuron under random biases is formalised in Lemma \ref{lem:1}. 
Since the floating-point operations are assumed to introduce noise into the bias updates in a way quantified in Definition \ref{def:GD}, and since Assumption (B) implies bounded variation of the outputs of each neuron, we deduce that with high probability the number of affine pieces in a neural network is bounded. 

In a gradient descent iteration, the noise in each gradient update step also depends on the current empirical risk. Notably, if the neural network is initialised with zero-training error, no update happens. However, under standard assumptions on the dynamics of the loss, we can still prove that the number of affine pieces during the iteration does not increase quickly. This is the content of Theorem \ref{thm:numberofpiecesinfulliteration}, which we state below in a simplified and informal version:

\begin{theorem}[Informal version of  Theorem \ref{thm:numberofpiecesinfulliteration}]
	During gradient descent, where the gradient is computed by backpropagation in floating-point arithmetic, it holds for each iteration with probability $1/2$ that the number of affine pieces of a neural network along a given line
	grows at most polynomially (almost linearly) with respect to
	the number of iterations, polynomial in the number of neurons, and linear in the number of layers.
\end{theorem}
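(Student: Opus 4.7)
The plan is to iterate the single-step result Theorem~\ref{thm:upperBoundOnPieces} along the trajectory of gradient descent. Writing $P_t$ for the number of affine pieces of the network along the given line after $t$ iterations, decompose $P_T = P_0 + \sum_{t=1}^{T} \Delta_t$ with $\Delta_t := P_t - P_{t-1}$. The goal is then to control each conditional increment using the per-iteration bound and to sum the contributions.

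First, I would condition on the network state immediately before iteration $t$ (i.e., on the realisations of all noisy updates up to time $t-1$). Provided Assumptions~\ref{assum:a} and~\ref{assum:b} hold at that time, Theorem~\ref{thm:upperBoundOnPieces} gives a conditional expectation bound of the form $\mathbb{E}[P_t \mid \text{history up to } t-1] \le C \cdot \mathrm{poly}(N) \cdot L / \tilde\epsilon$, where $\mathrm{poly}(N)$ is essentially quadratic; since the noise on the $t$-th update is independent of the prior history given the current state, the increment $\Delta_t$ is controlled in expectation by the same quantity plus the current $P_{t-1}$. Taking total expectations and iterating the tower property, the telescoping sum yields $\mathbb{E}[P_T] \le P_0 + C \cdot T \cdot \mathrm{poly}(N) \cdot L / \tilde\epsilon$, which is polynomial in $N$, linear in $L$, and linear in $T$.

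To obtain the probability-$\tfrac{1}{2}$ statement, apply Markov's inequality: $\mathbb{P}[P_T \ge 2\,\mathbb{E}[P_T]] \le \tfrac{1}{2}$, so with probability at least $\tfrac{1}{2}$ the number of affine pieces at iteration $T$ is bounded by twice the above expectation. An almost-linear (rather than exactly linear) scaling in $T$ would likely emerge if one needs to pay a mild overhead, for instance from a union bound over the iterations in order to ensure that the assumptions remain valid uniformly in $t$, or from controlling the dependence of the noise level on the current empirical risk via the standard loss-dynamics assumption alluded to before the theorem.

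The main obstacle will be keeping Assumptions~\ref{assum:a} and~\ref{assum:b} alive along the trajectory. Assumption~\ref{assum:b} asks that the input-derivatives of the preactivations stay uniformly bounded, but gradient descent may, a priori, grow the weights; one therefore needs the loss-dynamics hypothesis from the paper to cap the weight growth and hence the Lipschitz constants per layer. Similarly, Assumption~\ref{assum:a} requires the average inverse bias update to stay polynomially bounded, which is a property of the gradients of the current loss at the current parameters and must be propagated carefully across steps. Organising the conditional expectation accounting so that the per-step bound of Theorem~\ref{thm:upperBoundOnPieces} can be applied with the same constants throughout the trajectory, and folding in the history-dependent noise cleanly into the telescoping sum, is where the bulk of the technical work will lie.
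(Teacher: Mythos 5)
Your core ingredients are the right ones --- the one-step expectation bound of Theorem~\ref{thm:upperBoundOnPieces} followed by Markov's inequality --- but your telescoping decomposition $P_T = P_0 + \sum_t \Delta_t$ is a detour the paper does not take, and it would actually weaken the conclusion. The crucial feature of Theorem~\ref{thm:upperBoundOnPieces} (inherited from Lemma~\ref{lem:1}, whose estimate is independent of the number of pieces $P$ of $h$) is that it bounds $\mathbb{E}[P_n]$ \emph{absolutely}, with no reference to $P_{n-1}$: the freshly perturbed biases at step $n$ reset the count regardless of how many pieces the network had before. The paper therefore simply applies the one-step bound and Markov at each iteration $n$ separately (equation~\eqref{eq:theGoldenEquation}), obtaining a per-iteration probability-$\tfrac12$ statement in which the dependence on $n$ enters \emph{only} through the constants $c_0^{(n)}/\lambda_n$, argued in Remark~\ref{rem:scalingOfC0} to scale like $n^{5/4}$ --- this is the source of ``almost linearly''. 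Your accounting instead yields $\mathbb{E}[P_T] \le P_0 + \sum_{t\le T} B_t$ with $B_t \sim t^{5/4}$, i.e.\ a bound of order $T^{9/4}$; still polynomial, but it forfeits the almost-linear rate, and the sum is strictly worse than the direct bound $\mathbb{E}[P_T]\le B_T$ that you already wrote down in conditional form before summing increments.

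Two smaller points of divergence. First, the paper makes no attempt to propagate Assumptions~\ref{assum:a} and~\ref{assum:b} along the trajectory: the theorem is stated conditionally, ``for each $n$ satisfying Assumption~\ref{assum:a} with $c_0=c_0^{(n)}$ and Assumption~\ref{assum:b} in the update step,'' so the obstacle you flag as the bulk of the technical work is simply assumed away (and only the growth of $c_0^{(n)}$ is discussed heuristically via the loss dynamics). Second, no union bound over iterations is needed, since the claim is per-iteration rather than uniform in $n$; your suggestion that the almost-linear overhead comes from such a union bound misattributes its origin, which is purely the scaling $\lambda_n^{-1}c_0^{(n)} \lesssim n^{5/4}$.
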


Assumptions (A) and (B) will be discussed in detail after their formal statement in Section \ref{sec:setup}. Since Assumption (A) relates to the distribution of updates, we numerically study if it is satisfied in practice in Section \ref{sec:numExp}.

Finally, in Section \ref{sec:numExpMain}, we numerically analyse the influence of the magnitude of round-off errors. We observe that, while lower numerical accuracy influences the generation of affine pieces negatively, as claimed by our main theorem, the effect is not as pronounced as
the direct application of the theorem may suggest.
The reason is that, for the approximations involved, the number of affine pieces is already significantly lower than our upper bounds.
Hence
we conclude that numerical stability is one reason that prevents the learning of
functions with exponentially many affine pieces,
but there may be many more such reasons that prevent the number of affine pieces from reaching the theoretically established threshold.
If the number of affine pieces is large for the initial neural network,
it reduces in the course of training;
in accordance with our theoretical results, that 
reduction occurs faster for lower computational accuracies.

\subsection{Related work}

This work represents a counter-point to many approximation theoretical approaches used in the analysis of deep learning. 
Here, we study to what extent piecewise-affine functions
with a number of affine pieces that is exponential with respect to the number of layers can be 
learned by deep neural networks using gradient descent algorithms.
Functions with high numbers of affine pieces are necessary in the approximation of the square function~\cite{Yarotsky2017}, which forms the basis for many approximation results in the literature, e.g.,~\cite{yarotsky2018optimal, OPS19_811, marcati2020exponential,PetV2018OptApproxReLU, schwab2019deep, ReLUSobolev}.

The primary motivation for the approach that we follow is taken from \cite{Hanin2019}. There it is shown that randomly initialised neural networks typically have few affine pieces. The main idea of that work is to show that to generate many affine pieces, the bias vectors in neural networks need to be very accurately chosen. This is unlikely to be achieved if the biases are initialised randomly. 
This very idea is also the basis behind our analysis. We cannot, however, simply apply the main results of~\cite{Hanin2019} since
analysing the whole iterative procedure of gradient descent requires keeping track of
the interdependence of the random variables modelling the weights and biases of the network
and estimating their variance throughout the process.

Floating-point arithmetic and its effect on iterated computations have been studied in the literature before.
The example given by~\eqref{eq:nn-mat-prod}--\eqref{eq:example}
is derived from the study of the stability of long tensor factorisations in~\cite{BK:2020:StabPrec}.
The explicit constructions of low-rank tensor approximations
presented in~\cite{KS:2017:QTTFE2d,KORS:2017:Multiscale1d,Marcati:2022:PointSing3D,KORS:2022:Multiscale}
all, depending on the specific implementation, may suffer from the instability similar to
that demonstrated in~\eqref{eq:nn-mat-prod}--\eqref{eq:example}.

In \cite{li2018exploration}, an empirical study is presented that finds that the accuracy affects the overall performance of a neural network classifier. In this context, also an effect of the number of layers is measured, showing that deeper neural networks are more sensitive to low accuracies. 
Another analysis is given in \cite{sun2022surprising}, where it is shown that numerical inaccuracies introduce stochasticity into the trajectories of trained neural networks, which is similar to noise in stochastic gradient descent. In the context of fixed-point arithmetic, \cite{gupta2015deep} showed that in many classification problems, one can restrict the computation accuracy significantly without harming the classification accuracy.
There is also a vast body of literature studying how neural networks can be efficiently quantised to speed up computations or minimise storage requirements \cite{zhou2017balanced, han2015deep, jacob2018quantization, hubara2017quantized}. Typically it is found that moderate quantisation or specifically designed quantisation does not harm classification accuracies.
To the best of our knowledge, the effect on the generation of many affine pieces has not been studied.

There have also been more abstract studies showing that on digital hardware or in finite precision computations, there are learning scenarios that cannot be solved to any reasonable accuracy; see~\cite{colbrook2022difficulty,boche2022limitations}.

	\section{Framework}\label{sec:setup}

	In this section, we introduce the framework for this article. We start by formally introducing neural networks in Definition \ref{def:NeuralNetworks}. 
Thereafter, we fix the notion of \emph{number of affine pieces} of a neural network in Definition \ref{def:numOfPieces}. 
To study the effect of numerical accuracy on the convergence and performance of gradient descent based learning, we define the perturbed gradient descent update in Definition \ref{def:GD} and the full perturbed gradient descent iteration in Definition \ref{def:finaldef}.

To facilitate the results in the sequel, we make Assumptions \ref{assum:a} and \ref{assum:b} on the gradient descent dynamics. 
Assumption \ref{assum:a} requires the absolute value of the derivative of the objective function with respect to each of the biases to either vanish or be bounded from below. 
In the event that the derivative vanishes, we assume that the associated neuron is dead, i.e., its output is constant on the whole domain. 
Assumption \ref{assum:b} stipulates that the output of each neuron should have a bounded derivative with respect to the input variable. 
We will discuss the sensibility of these assumptions at the end of this section.

We start by defining a neural network. Here we focus only on the most widely used activation function, the ReLU, $\varrho(x) \coloneqq \max\{0, x\}$, for $x \in \R$.
\begin{definition}[{\cite{PetV2018OptApproxReLU, OPS19_811}}]\label{def:NeuralNetworks}
	Let $d, L\in \N$. 
	A \emph{neural network (NN) with input dimension $d$ and $L$ layers} 
	is a sequence of matrix-vector tuples 
	\[
	\Phi \coloneqq \big((A_1,b_1),  (A_2,b_2),  \dots, (A_L, b_L)\big), 
	\]
	where $N_0 \coloneqq d$ and $N_1, \dots, N_{L} \in \N$, and 
	where $A_j \in \R^{N_j\times N_{j-1}}$ and $b_j \in \R^{N_j}$
	for $j =1,...,L$. The number $N_L$ is referred to as the \emph{output dimension}.
	
	For $j'\in \{1, \dots, L-1\}$, the \emph{subnetwork from layer $1$ to layer $j'$ of $\Phi$} is the sequence of matrix-vector tuples 
	\[
	\Phi_{j'} \coloneqq \big((A_{1},b_{1}),  (A_{2},b_{2}),  \dots, (A_{j'}, b_{j'})\big).
	\]
	
	For a NN $\Phi$ and a domain $\Omega \subset \R^d$, we define the associated
	\emph{realisation of the NN $\Phi$} as 
	\[
	\mathrm{R}(\Phi): \Omega \to \R^{N_L} : x\mapsto x^{(L)} := \mathrm{R}(\Phi)(x),
	\]
	where the output $x^{(L)} \in \R^{N_L}$ results from 
	\begin{equation}
		\begin{split}
			x^{(0)} &\coloneqq x, \\
			x^{(j)} &\coloneqq \varrho \, (A_{j} \, x^{(j-1)} + b_j) \quad \text{ for } j = 1, \dots, L-1,\\
			x^{(L)} &\coloneqq A_{L} \, x^{(L-1)} + b_{L}.
		\end{split}
		\label{eq:NetworkScheme}
	\end{equation}
	Here $\varrho$ is understood to act component-wise on vector-valued inputs, 
	i.e., for $y = (y^1, \dots, y^m) \in \R^m$,  $\varrho(y) := (\varrho(y^1), \dots, \varrho(y^m))$.
	We call $N(\Phi) \coloneqq d + \sum_{j = 1}^L N_j$ the \emph{number of neurons of the NN} 
	$\Phi$, $L(\Phi)\coloneqq L$ the \emph{number of layers} or \emph{depth}. 
	In addition, we refer to $(d, N_1, \dots, N_L)$ as the \emph{architecture of $\Phi$}. \ck{Furthermore, for a domain $\Omega$ and for $j \in \{1, \dots, L-1\}$, we define the \vk{\emph{preactivation functions}} $\eta_j \coloneqq \mathrm{R}(\Phi_j) \colon \Omega \to \R^{N_j}$.} 
\end{definition}

Since the ReLU is piecewise linear, it is not hard to see that the realisation of a NN is always a piecewise affine function. 
A widely-used tool to study deep NNs is by counting the number of the affine pieces. 
To formalise this, we present a definition of the number of affine pieces of a piecewise affine function.

\begin{definition} Let $d \in \N$. For a piecewise affine function $f:\Omega \to \R$ with $\Omega\subset\R^d$ and for any $\Omega'\subset\Omega$,
	we define the \emph{number of affine pieces of} $f$ on $\Omega'$ as the smallest integer $P$ such that there exist open sets $(U_i)_{i=1}^P \subset \R^d$ satisfying
	\begin{itemize}
		\item $\bigcup_{i=1}^P \overline{U_i} = \R^d$,
		\item $f_{|\Omega' \cap U_i}$ is affine for all $i \in \{1, \ldots , P\}$.
	\end{itemize}
	We denote the number of affine pieces of $f$ by $\mathrm{pieces}(f, \Omega')$. \label{def:numOfPieces}
\end{definition}

Next, we introduce a model for the training process of deep NNs. 
It is customary that NNs are trained by minimising an empirical risk function using gradient descent\footnote{In practice, stochastic gradient descent is often used, in which the gradient is only computed over random batches of training data instead of the complete dataset.
This introduces additional randomness into the gradient dynamics and,
in principle, should allow for our main results to be strengthened.
However, since this randomness does not stem from the numerical accuracy,
which is our main focus in the present work,
we confine it to the case of standard gradient descent.
}
over the weights. 
Assume we are given a loss function $\ell \colon \R^q \times \R^q \to \R^+$, for $q \in \N$, which could, for example, be the \emph{square loss} $\ell(y,y') = \|y-y'\|^2$.
Then,
for $M \in \N$, we define $\erisk \colon (\R^q)^M \times (\R^q)^M \to \R$ by
$$
	\erisk(Y, \hat{Y}) = \frac{1}{M}\sum_{i=1}^{M}\ell(Y_i,\hat{Y}_i)
$$ 
for all $Y = (Y_i)_{i=1}^M \subset \R^q, \hat{Y} = (\widehat{Y}_i)_{i=1}^M \subset \R^q$.

For NNs $\Phi$ with a fixed architecture, input dimension $d$, and output dimension $N_L= 1$, we formalise the procedure to minimise $\erisk(Y, \mathrm{R}(\Phi)(X))$ for given training data $X = ({x^{(L-1)}}i)_{i=1}^M \subset \Omega, Y = (Y_i)_{i=1}^M \subset \R$, where $\mathrm{R}(\Phi)(X) = (\mathrm{R}(\Phi)(X_i))_{i=1}^M$.

\begin{definition}[Gradient descent update of a neural network]\label{def:GD} 
	Let $d\in\N$, $\Omega \subset \R^d$, and let $\Phi$ be a NN of depth $L \in \N$, input dimension $d$, and output dimension $1$. Let $A_{j}, b_j, N_j$ be as in Definition \ref{def:NeuralNetworks}.
	Further, let $\varepsilon = (\varepsilon_j)_{j=1}^L >0$ be the \emph{sequence of effective relative perturbations} and $\lambda >0 $ be the \emph{step size.}
	Moreover, let $M \in \N$ and let $(x_i)_{i=1}^M \subset \Omega, (y_i)_{i=1}^M \subset \R$ be the training samples.
	
	Let $j \in \{1, \dots, L\}$. The \emph{exact gradient descent update of the biases in the $j$-th layer} is given by $u_j^b$, which is defined as
	\begin{align}
		u_j^b&\coloneqq\frac{1}{M}\sum_{i=1}^Mu_{j,i}^b, \text{ where} \label{eq:defuellb}\\
		u_{j,i}^b&\coloneqq I_j(x_i)  A_{j+1}^T I_{j+1}(x_i)A_{j+2}^T\cdots I_{L-1}(x_i) A_L^T \ck{\nabla_{\mathrm{R}(\Phi)(x_i)}\ell}(y_i,\mathrm{R}(\Phi)(x_i)) \, \text{ for } i \in \{1, \ldots, M\},\label{eq:defuellbis}
	\end{align}
	where $I_{j}(x) \in \{0, 1\}^{N_j}$ with $(I_{j}(x))_k = 1$ if and only if $\mathrm{R}(\Phi_j)(x) \geq 0$.  The \emph{exact gradient descent update of the weights in the $j$-th layer} is defined as
	\begin{align*}
		U_j^w&\coloneqq\frac{1}{M}\sum_{i=1}^M u_{j,i}^b \cdot \varrho(\RestrictedNN{j-1}{x_i})^T.
	\end{align*}
	Let $\Theta_j^b, \Theta_j^w \in \R^{N_j\times N_{j-1}}$ be two independent random variables of diagonal matrices containing i.i.d entries uniformly distributed on $[-0.5, 0.5]$. The \emph{perturbed bias and weight updates} are defined as
	\begin{align}
		\hat{u}_j^b&\coloneqq(I+\varepsilon_j\Theta_j^b)u_j^b \label{eq:update},\\
		\hat{U}^w_j&\coloneqq(I+\varepsilon_j\Theta_j^w)U_j^w. \nonumber
	\end{align}
	We define the \emph{updated exact weight and bias matrices} as
	\begin{align*}
		\bold{A}_{j}\coloneqq A_j - \lambda U_j^w, \quad
		\bold{b}_j\coloneqq b_j - \lambda u_j^b
	\end{align*}
	and the \emph{updated perturbed weight and bias matrices} as 
	\begin{align}
		\widehat{\bold{A}}_{j}\coloneqq A_j - \lambda\hat{U}_j^w,\quad
		\widehat{\bold{b}}_j\coloneqq b_j - \lambda\hat{u}_j^b. \label{eq:pertbiasweight}
	\end{align}
	The \emph{update of $\Phi$ with sequence of effective relative perturbations $\varepsilon$} is the random variable
	\begin{align*}
		\widehat{\Phi}^\varepsilon \coloneqq \big((\widehat{\bold{A}}_1,\widehat{\bold{b}}_1),  (\widehat{\bold{A}}_2,\widehat{\bold{b}}_2),  \dots, (\widehat{\bold{A}}_L, \widehat{\bold{b}}_L)\big).
	\end{align*}
	Moreover, for a domain $\Omega$ and for $j \in \{1, \dots, L-1\}$, we define the \emph{perturbed preactivations} as $\hat{\eta}_j \coloneqq \mathrm{R}(\widehat{\Phi}^\varepsilon_j) \colon \Omega \to \R^{N_j}$. 
\end{definition}
\begin{remark}\label{rem:EffectiveRelPerturbation}
	The perturbed updates of Definition~\ref{def:GD} are a model for numerical errors arising in floating-point arithmetic. In this model, the effective relative perturbations $\varepsilon_j$ introduced in Definition~\ref{def:GD} comprise all numerical errors resulting from repeated applications of matrix multiplications as well as from summing over potentially very large data sets.
The effective relative perturbations can therefore be assumed to be bounded from below by the machine precision. \pp{However, there are many reasons to assume that it may be \emph{significantly larger}.
\begin{itemize}
\item \textbf{Amplification.} As shown in Section~\ref{sec:firstexample}, unstable NNs may
	significantly amplify \vk{perturbations. The resulting accumulation of errors will affect lower layers more
	since the computation of the associated updates involves more matrix-vector multiplications.}
	As a result, we expect  $\varepsilon_j$ to guickly grow with $j \to 0$.
\item \textbf{Large data sets.} The computation of the update over large data sets requires the computation of
\vk{the mean of noisy, already} perturbed values. 
The computation of the mean \vk{leads to} further error amplification, which \vk{may be expected to grow with respect to dataset size}. This issue is also pointed out in the documention of the mean function of numpy, see \url{https://numpy.org/doc/stable/reference/generated/numpy.mean.html}.
\item \textbf{\vk{Forward pass}.} The computation of the updates in \eqref{eq:defuellbis}
is based on the current \vk{values $\ell(y_i,\mathrm{R}(\Phi)(x_i))$, $i \in \{1, \ldots, M\}$, of the loss function,}
as well as on the \vk{intermediate-layer outputs $\mathrm{R}(\widehat{\Phi}_j(x_i))$ with $i \in \{1, \ldots, M\}$ and $j \in \{1, \dots, L\}$}. In practice, both of these computations are also affected by numerical errors, which can be quite substantial \vk{(as seen in Section~\ref{sec:firstexample})}. This would add yet another amplification \vk{to $\varepsilon_1,\ldots,\varepsilon_L$}.
\item \textbf{Stochasticity.} The effective relative perturbation quantifies the level of stochasticity of the updates. In stochastic gradient descent, which is often used instead of \vk{the gradient descent described here}, updates \vk{are designed as random variables centered at} the true gradient. In this setting, perturbations as above arises naturally.
\end{itemize}

To illustrate the size of $\varepsilon$ compared to the machine precision, we present a numerical study in Section~\ref{sec:numExp}.
}
\end{remark}

\vk{A complete} iteration of gradient descent with perturbed updates corresponds to repeatedly applying Definition~\ref{def:GD}. We present a formal definition below.

\begin{definition} \label{def:finaldef}
	Let $d \in \N, \Omega \subset \R^d$, let $\Phi$ be a NN of depth $L \in \N$, input dimension $d$, and output dimension $1$. Further, let $M \in \N$, and let $X=(x_i)_{i=1}^M \subset \Omega, Y = (y_i)_{i=1}^M \subset \R$ be the training samples. 
	Let $\varepsilon$ be the sequence of effective relative perturbations and $(\lambda_n)_{n \in \N} \subset \R^+$ be the sequence of step sizes.
	
	We define a series of NNs by $(\widehat{\Phi}^{\varepsilon, n})_{n \in \N}$, where for $n \in \N$, $\widehat{\Phi}^\varepsilon_{n+1}$ is the update of $\widehat{\Phi}^{\varepsilon, n}$ with effective relative perturbation $\varepsilon$ and step size $\lambda_n$, and $\widehat{\Phi}^\varepsilon_1 = \Phi$.
	We call $(\widehat{\Phi}^{\varepsilon, n})_{n \in \N}$ the \emph{training sequence with initialisation} $\Phi$.
\end{definition}

Next, we present two crucial assumptions on the perturbed gradient descent updates of Definition \ref{def:GD}. 
First, we assume that the average of the reciprocals of the non-zero bias updates is not too large. In addition, we assume that if the update of the bias weight of a neuron is equal to zero, then the associated neuron is dead.
\begin{assumption}\label{assum:a}
	Let $d \in \N$ and $\Omega \subset \R^d$ be a domain. Let $\nu, c_0 >0$. Maintaining the notation of Definition \ref{def:GD}, we assume that $\sum_{j=1}^{L-1} \sum_{k=1}^{N_j} |(u_j^b)_k|^{\dagger} \leq N^\nu c_0$, where $x^\dagger = x^{-1}$ for $x > 0$ and $0^\dagger = 0$ and $N= d + \sum_{j=1}^L N_j$. 
	
	In addition, we assume for all $j\in \{1, \dots, L-1\}$ and $k \in \{1, \dots, N_j\}$ that $(u_j^b)_k=0$ implies $\hat{\eta}_{j}(x) \leq 0$ for all $x \in \Omega$.
	
\end{assumption}
The second assumption that we make is that the derivative of each of the preactivated neurons is bounded by one.
\begin{assumption}\label{assum:b}
	Let $d \in \N$ and $\Omega \subset \R^d$ be a domain. Maintaining the notation of Definition \ck{\ref{def:NeuralNetworks}}, we assume that for all $j \in \{1, \ldots, L-1\}$ and $k \in \{1, \ldots, N_j\}$, we have that $|(\nabla_x(\eta_j)_k)| \leq 1$ or all $x \in \Omega$. 
\end{assumption}

We end this section by discussing the sensibility of the Assumptions \ref{assum:a} and \ref{assum:b} above.
Assumption \ref{assum:a} requires the average of the reciprocals of the updates of the bias vectors to bounded by $c_0 N^{\nu-1}$.
We numerically check this assumption in Section \ref{sec:numExp} and find that for $\nu = 2$ the first part of the assumption is satisfied with probability at least $1/2$ for $c_0 < 0.1$. 
We also show theoretically that under mild assumptions on the distribution of $(u_j^b)_k$, we can expect Assumption \ref{assum:a} to hold for moderate $\nu$.
Moreover, 
when sufficiently many training samples are used,
any neuron such that the derivative with respect to its bias value is zero at all points in the training set is likely to be
dead. 
Since this implies that the associated bias will not be changed in the gradient descent step and since the bias value is the most important (but not only) parameter to determine whether a neuron lives, it is likely that the neuron remains dead after one gradient descent step.

On the other hand, Assumption~\ref{assum:b} requires the output of each neuron to have a gradient of length not exceeding one. 
It will be clear from the proofs of our main results that the bound of 1 can be replaced by any other positive number. 
Note that the boundedness of the gradient is reasonable to assume in general since a) this is a direct consequence of most initialisation schemes \cite{Hanin2019}, b) it is a sufficient condition to prevent the amplification effects that were the basis of the examples found in Subsection \ref{sec:firstexample} of the introduction, c) quite often an additional regularisation, such as weight decay \cite[Section 7.1]{Goodfellow-et-al-2016}, is used, which promotes smaller weights and implies bounds on the derivatives of the outputs of each neuron.

	\section{The number of affine pieces generated in perturbed gradient descent}

In this section, we demonstrate that, in the framework of Section \ref{sec:setup} and under Assumptions \ref{assum:a} and \ref{assum:b}, a neural network trained via gradient descent with an appropriately chosen step size will not admit a high number of affine pieces. 
We demonstrate in Theorem \ref{thm:upperBoundOnPieces} that the realisation of a NN after one step of gradient descent, in expectation, admits a number of affine pieces that scales polynomially with the number of neurons. Moreover, we show in Theorem \ref{thm:numberofpiecesinfulliteration}, that the number of affine pieces is polynomial in the number of gradient descent steps and in the number of neurons of the neural network. 
The polynomial dependence on the number of neurons in both results depends on the parameters in Assumption \ref{assum:a} and this polynomial is numerically identified to be quadratic in Section \ref{sec:numExp}.

Before we can state the results, we need some auxiliary results which will be collected in Subsection \ref{sec:oneneuron}. In Subsection \ref{sec:fullneuralNetwork}, we combine these results to obtain an upper bound on the expected number of affine pieces in a full neural network. 
Finally, we present a high-probability upper bound on the number of affine pieces during the full iteration of gradient descent in Subsection \ref{sec:fulliteration}. 

\subsection{Expected number of generated affine pieces in one neuron}\label{sec:oneneuron}

Intuitively speaking, a neuron of the form $\varrho(h + b)$,
where $h$ is a piecewise affine map mapping from $\R^d$ to $\R$
and where $b\in \R$, generates an affine piece
if $h(x) + b = 0$ for a point $x$ at which $h$ is affine and not constant. In all other cases, $h + b$ is either constant or the range of $h(\cdot) + b$ locally lies in one of the two linear regions of $\varrho$, which implies that the regularity of $\varrho(h + b)$ is the same as that of $h$. 

If $b$ in the argument above is chosen randomly, we expect that we can quantify the probability of generating a given number of affine pieces. 
The following lemma is a first step in this direction. To not disturb the flow of reading, the proof of this auxiliary result was deferred to Appendix~\ref{sec:auxResults}.

\begin{lemma}\label{lem:1}
	Let $c >0$ and $h$ be a piecewise-affine function on $[0,c]$ with $P \in \N$ affine pieces. Let $t \in \N$ and $A$ be a Lebesgue measurable set and assume that for every $y \in A$ it holds that 
	$$
	\#\{x \in [0,1] \colon h(x) = y \} \geq t.
	$$
	Then, $c \|h'\|_\infty \geq \|h'\|_1 \geq\lambda(A) \, t$, where $\lambda$ is the Lebesgue measure.
\end{lemma}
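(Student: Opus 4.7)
\textbf{Proof plan for Lemma~\ref{lem:1}.} The core identity to exploit is the Banach indicatrix (area) formula for piecewise affine functions, which rewrites $\|h'\|_1$ as the integral over $\R$ of the preimage-counting function $N(y)=\#\{x\in[0,c]\with h(x)=y\}$. Once that is established, the bound on $\|h'\|_1$ follows immediately by restricting the integral to $A$ and applying the hypothesis, while the inequality $c\,\|h'\|_\infty\ge \|h'\|_1$ is just the trivial $L^\infty$--$L^1$ comparison on an interval of length $c$.

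More concretely, the first step would be to invoke the piecewise affine structure: write $[0,c]=\bigcup_{k=1}^{P} I_k$ (up to a finite set) with each $I_k$ a subinterval of length $\ell_k$ on which $h$ has a constant slope $m_k$, so that $h'$ is defined off a finite set and $|h'|\equiv |m_k|$ on $I_k$. On pieces with $m_k=0$ the restriction $h|_{I_k}$ is constant and contributes nothing to $\|h'\|_1$; on pieces with $m_k\ne 0$ the restriction is a bijection from $I_k$ onto an interval $h(I_k)$ of Lebesgue measure $|m_k|\,\ell_k$. Summing gives
\[
\|h'\|_1 \;=\; \sum_{k=1}^{P} |m_k|\,\ell_k \;=\; \sum_{k=1}^{P} \lambda\bigl(h(I_k)\bigr) \;=\; \int_\R \sum_{k=1}^{P} \Indicator_{h(I_k)}(y)\,dy.
\]

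Next I would observe that for almost every $y\in\R$ (all $y$ outside the finite set of images of endpoints of the $I_k$'s and outside the images of the constant pieces, which together form a null set) the inner sum equals exactly $N(y)$, the number of preimages of $y$ in $[0,c]$. This yields the area-formula identity
\[
\|h'\|_1 \;=\; \int_\R N(y)\,dy.
\]
Restricting the integration domain to $A$ and using the hypothesis $N(y)\ge t$ for all $y\in A$, I obtain $\|h'\|_1\ge \int_A N(y)\,dy\ge t\,\lambda(A)$. The other inequality is immediate: $\|h'\|_1=\int_0^c |h'(x)|\,dx\le c\,\|h'\|_\infty$.

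The only delicate point I anticipate is the justification that the sum of indicator functions of the $h(I_k)$ equals $N$ almost everywhere, i.e.\ that one does not lose mass at the endpoints where adjacent affine pieces meet or where $h$ has a constant piece. Since only finitely many such exceptional points exist, the corresponding $y$-values form a Lebesgue null set and can be safely discarded from the $\R$-integral; this is where the assumption that $h$ has only finitely many pieces is essential. No further technicalities are required, and I would note in passing the (harmless) typo whereby the hypothesis references $[0,1]$ while the domain of $h$ is $[0,c]$ — the argument is identical once the two are reconciled.
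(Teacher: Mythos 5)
Your proof is correct, and it reaches the paper's key intermediate inequality $\sum_{p}\lambda(h(I_p))\ge t\,\lambda(A)$ by a genuinely different device. The paper introduces an auxiliary map $\tilde h(x)=\bigl(h(x),\,\#\{z\le x: h(z)=h(x)\}\bigr)$ into the product space $A\times(\{1,\dots,P\}\cup\{\infty\})$, notes $\tilde h([0,1])\supset A\times\{1,\dots,t\}$, and uses monotonicity and subadditivity of the product of Lebesgue and counting measure, together with the piecewise identity $\lambda(h(I_p))=(\lambda\times\nu)(\tilde h(I_p))$, to obtain only the inequality it needs. You instead invoke the Banach indicatrix (area) formula $\|h'\|_1=\int_\R N(y)\,dy$ with $N(y)=\#\{x:h(x)=y\}$, which is an exact identity and immediately gives $\|h'\|_1\ge\int_A N\ge t\,\lambda(A)$; the final comparison $\|h'\|_1\le c\,\|h'\|_\infty$ is the same trivial step in both. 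What your route buys is a cleaner, stronger statement and a standard citation point; its cost is the almost-everywhere bookkeeping identifying $\sum_k \mathds{1}_{h(I_k)}$ with $N$ (endpoints of pieces and images of constant pieces), which you correctly flag and dispose of as a null set — the paper's product-measure formulation sidesteps exactly this by arguing with set inclusions rather than a pointwise identity. Both proofs share the same skeleton (disjoint affine pieces, injectivity on non-constant pieces, $\sum_p\lambda(h(I_p))=\|h'\|_1$), and both silently reconcile the $[0,1]$ versus $[0,c]$ mismatch in the statement, so your reading of that as a typo matches the paper's own treatment.
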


\begin{remark}\label{rem:ProbEstimateOnNumberOfPieces}
	Lemma \ref{lem:1} can be reformulated as saying that 
	$$
	\mathbb{P}(\#\{x \in [0,c] \colon h(x) = U \} \geq t) \leq \frac{c}{\delta t},
	$$
	where $U$ is a uniformly distributed random variable on an interval of length $\delta$ and $|h'| \leq 1$.  An interpretation of this is that the probability of having many intersections of a uniform random variable with a function of bounded gradient is bounded by the inverse width of the uniform distribution and the number of intersections.
\end{remark}

\begin{remark*}
	Note that the estimate in Lemma \ref{lem:1} is independent from the number of affine pieces $P$, but the proof requires that $P$ be finite. 
\end{remark*}

In the framework of Definition \ref{def:GD}, we have that the updates at each gradient descent step are randomly perturbed. It follows from Lemma \ref{lem:1}, that a random bias vector is unlikely to generate a high number of affine pieces. As a consequence, we obtain the following result, which is proved in Appendix \ref{sec:proofprop3}.

\begin{proposition}\label{prop:gd}
	Let $L,d\in \N$, $N_1, \dots N_{L-1} \in \N$,  $N_L = 1$, and let $\Phi$ be a NN with architecture $(d, N_1, \dots, N_L)$. Let $\widehat{\Phi}^\varepsilon$ be the NN after one backpropagation step with sequence of effective relative perturbation $\varepsilon=(\varepsilon_1,\ldots,\varepsilon_L)\in (0,\infty)^L$ satisfying Assumption \ref{assum:b} and step size $\lambda >0$. Let $\kappa$ be a line in $[0,1]^d$. For every $j \in \{1, \ldots, L-1\}$ and $k \in \{1, \ldots, N_j\}$, let $\omega_{j,k} \subset \kappa$ be a set that contains all breakpoints of $\mathrm{R}(\widehat{\Phi}^\varepsilon_j)_k$ restricted to $\kappa$. Furthermore, let $\hat{\omega}_{j,k} \subset \kappa$ be the smallest set that contains all breakpoints of $\varrho(\mathrm{R}(\widehat{\Phi}^\varepsilon_j)_k)$ restricted to $\kappa$.
	
	Then, we have for all $q \in \N$ that 
	\begin{align}\label{eq:probabilityEstimate}
		\mathbb{P}(\# \left(\hat{\omega}_{j,k} \setminus \omega_{j,k} \right) \geq q)
		\leq
		2\mathcal{L}(\kappa)
		\cdot
		(q \lambda \, \varepsilon_j \,  |(u_j^b)_k|)^{-1}
		\, ,
	\end{align}
	where $\mathcal{L}(\kappa)$ is the length of $\kappa$.
	Moreover, 
	\begin{align}\label{eq:expectationestimate}
		\mathbb{E}(\# \left(\hat{\omega}_{j,k} \setminus \omega_{j,k} \right) ) \leq  \frac{2\mathcal{L}(\kappa)}{\lambda \, \varepsilon_j }
		\,
		|(u_j^b)_k|^{-1}
		\,
		\ln(N^{j+1})
		\, .
	\end{align}
\end{proposition}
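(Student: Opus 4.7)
The plan is to reduce the statement to Lemma~\ref{lem:1} by isolating the single random variable that controls the perturbation of the $k$-th bias at layer $j$.

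First, I would note that $\hat{\omega}_{j,k}\setminus\omega_{j,k}$ is exactly the set of points on $\kappa$ at which $\mathrm{R}(\widehat{\Phi}^\varepsilon_j)_k$ is smooth but vanishes, because $\varrho$ creates a new kink exactly at a smooth zero of its argument and otherwise preserves local affineness. Next, by Definition~\ref{def:GD} the perturbed bias decomposes as
\[
(\widehat{\bold{b}}_j)_k = (b_j)_k - \lambda(u_j^b)_k - \lambda\varepsilon_j (\Theta_j^b)_{kk}(u_j^b)_k,
\]
where $(\Theta_j^b)_{kk}$ is uniform on $[-1/2,1/2]$ and independent of every other perturbation random variable (the weight perturbation $\Theta_j^w$ at layer $j$, the other diagonal entries of $\Theta_j^b$, and all perturbations at layers different from $j$). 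Parametrising $\kappa$ by arc length, I would therefore write the restriction of $\mathrm{R}(\widehat{\Phi}^\varepsilon_j)_k$ to $\kappa$ as $t\mapsto h(t)+B$, where $h$ collects all dependence on the other random variables and on the deterministic part of the $k$-th bias, while $B=-\lambda\varepsilon_j(\Theta_j^b)_{kk}(u_j^b)_k$ is uniform on an interval of length $\lambda\varepsilon_j|(u_j^b)_k|$ and independent of $h$. Since translation by a constant does not change the gradient, Assumption~\ref{assum:b} gives $|h'|\leq 1$ along $\kappa$.

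Conditional on $h$, counting the smooth zeros of $h(t)+B$ is exactly the level-set counting problem of Lemma~\ref{lem:1} with random level $-B$. Together with Remark~\ref{rem:ProbEstimateOnNumberOfPieces}, this yields a conditional tail bound of the form $\mathbb{P}(\#\{t\in\kappa\colon h(t)=-B,\ h\text{ locally affine at }t\}\geq q \mid h)\leq \mathcal{L}(\kappa)/(\lambda\varepsilon_j|(u_j^b)_k|q)$. Averaging over $h$ (which preserves the bound because it is uniform in $h$) and absorbing a factor of $2$ for boundary and tie-breaking effects produces~\eqref{eq:probabilityEstimate}.

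For~\eqref{eq:expectationestimate} I would invoke the tail-sum formula $\mathbb{E}[X]=\sum_{q\geq 1}\mathbb{P}(X\geq q)$ with $X=\#(\hat{\omega}_{j,k}\setminus\omega_{j,k})$. The count $X$ is deterministically bounded by the number of affine pieces of $\mathrm{R}(\widehat{\Phi}^\varepsilon_j)_k$ along $\kappa$, a quantity controlled by the standard product bound $\prod_{i=1}^{j}(N_i+1)\lesssim N^{j+1}$ for piecewise-affine ReLU networks. Combining this truncation with~\eqref{eq:probabilityEstimate} and the harmonic estimate $\sum_{q=1}^{N^{j+1}} 1/q\leq \ln(N^{j+1})+1$ gives~\eqref{eq:expectationestimate}. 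The main technical obstacle I anticipate is making the conditional-independence structure precise — namely, verifying that once $(\Theta_j^b)_{kk}$ is isolated, the remaining randomness can legitimately be collected into a deterministic-conditional function $h$ to which Assumption~\ref{assum:b} still applies, so that Lemma~\ref{lem:1} can be invoked with no hidden coupling between $h$ and the shift $B$.
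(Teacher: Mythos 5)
Your proposal is correct and follows essentially the same route as the paper's proof: isolate the uniform perturbation $-\lambda\varepsilon_j(\Theta_j^b)_{kk}(u_j^b)_k$ of the $k$-th bias, condition on the remaining (independent) randomness so that Assumption~\ref{assum:b} gives a level function with $\|h'\|_\infty\leq 1$ to which Lemma~\ref{lem:1} (via Remark~\ref{rem:ProbEstimateOnNumberOfPieces}) applies, and then obtain the expectation bound from the tail-sum formula, a deterministic truncation by the maximal number of affine pieces of the subnetwork, and the harmonic-sum estimate. The only cosmetic difference is the choice of a priori piece bound used for the truncation (you use a product bound, the paper invokes Telgarsky's $(pN)^{L-1}$ estimate), which does not affect the argument.
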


\subsection{Expected number of affine pieces of neural networks after one gradient descent step} \label{sec:fullneuralNetwork}

Next, we bound from above the expected number of affine pieces of a full NN after a
single gradient descent update according to Definition~\ref{def:GD}. 
In Proposition \ref{prop:gd}, we bounded from above the expected number of added affine pieces by one neuron. 
The expected number of affine pieces added in the full NN is found by adding all the added affine pieces corresponding to individual neurons. 
The following theorem then follows by the linearity of the expected value. 
We present a proof in Appendix~\ref{sec:proofOfupperBoundOnPieces}.

\begin{theorem}\label{thm:upperBoundOnPieces}
	Let $L,d\in \N$, and $N_1, \dots N_{L-1} \in \N$,  $N_L = 1$, and let $\Phi$ be a NN with architecture $(d, N_1, \dots, N_L)$. Further, let $\widehat{\Phi}^\varepsilon$ be a NN after one backpropagation step with sequence of effective relative perturbation $\varepsilon \in (0,\infty)^L$ satisfying Assumption \ref{assum:a} with $c_0, \nu >0$ and Assumption \ref{assum:b} and with
	step size $\lambda >0$. 
	Then, for every line $\kappa \subset [0,1]^d$ of length $\mathcal{L}(\kappa)$,
	we have for every $j' = 1, \dots, L$
	\begin{align} 
		\mathbb{E}(\mathrm{pieces}(\mathrm{R}(\widehat{\Phi}^\varepsilon), \kappa)) \leq \left(1+ \frac{2 c_0}{\lambda} \, \mathcal{L}(\kappa) \, j' \, \hat{\varepsilon}_{j'}^{-1} N^\nu \ln(N) \right)  \cdot (2 N)^{L-j'}, \label{eq:statementwithjprime}
	\end{align}
	where $\hat{\varepsilon}_{j'} \coloneqq \min \{ \epsilon_1,\ldots,\epsilon_{j'} \}$.
\end{theorem}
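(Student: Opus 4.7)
The plan is to split the layer range $\{1,\ldots,L\}$ into a ``careful'' regime $\{1,\ldots,j'\}$, where Proposition \ref{prop:gd} is used to control the expected number of pieces \emph{additively}, and a ``worst-case'' regime $\{j'+1,\ldots,L\}$, where only a deterministic multiplicative bound of $2N$ per layer is used. Writing $P_j \coloneqq \mathrm{pieces}(\mathrm{R}(\widehat{\Phi}^\varepsilon_j),\kappa)$, the target is $\mathbb{E}[P_L]$; I will bound $\mathbb{E}[P_{j'}]$ via Proposition \ref{prop:gd} and Assumption \ref{assum:a} and then multiply by the deterministic factor $(2N)^{L-j'}$.

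First I would derive an additive layerwise recursion for $P_j$. Passing from layer $j$ to layer $j+1$ in $\widehat{\Phi}^\varepsilon$ amounts to applying the ReLU componentwise to the preactivation $\mathrm{R}(\widehat{\Phi}^\varepsilon_j)$ and then acting by the affine map $x\mapsto \widehat{\mathbf{A}}_{j+1} x + \widehat{\mathbf{b}}_{j+1}$. The latter cannot create any new non-affine points along $\kappa$. The former introduces, per component $k$, precisely the points in $\hat{\omega}_{j,k}\setminus\omega_{j,k}$. Since the non-affine-point set of the vector-valued preactivation $\mathrm{R}(\widehat{\Phi}^\varepsilon_j)$ already contains each $\omega_{j,k}$, a union bound gives the additive step $P_{j+1} \leq P_j + \sum_{k=1}^{N_j}|\hat{\omega}_{j,k}\setminus\omega_{j,k}|$. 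Starting from $P_1=1$ (since $x\mapsto A_1 x + b_1$ is affine on $\kappa$) and iterating up to $j=j'-1$ produces, after taking expectations, $\mathbb{E}[P_{j'}] \leq 1 + \sum_{j=1}^{j'-1}\sum_{k=1}^{N_j}\mathbb{E}[|\hat{\omega}_{j,k}\setminus\omega_{j,k}|]$.

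Next I would plug in the bound \eqref{eq:expectationestimate} from Proposition \ref{prop:gd}, namely $\mathbb{E}[|\hat{\omega}_{j,k}\setminus\omega_{j,k}|] \leq 2\mathcal{L}(\kappa)\lambda^{-1}\varepsilon_j^{-1}|(u_j^b)_k|^{-1}(j+1)\ln N$. Bounding $j+1\leq j'$ and $\varepsilon_j^{-1}\leq \hat{\varepsilon}_{j'}^{-1}$ uniformly in $j\leq j'-1$, and using Assumption~\ref{assum:a} to estimate $\sum_{j=1}^{j'-1}\sum_k|(u_j^b)_k|^{\dagger}\leq c_0 N^\nu$, turns the previous display into $\mathbb{E}[P_{j'}]\leq 1+ 2c_0\lambda^{-1}\mathcal{L}(\kappa)j'\hat{\varepsilon}_{j'}^{-1}N^\nu\ln N$, which is exactly the parenthesized factor in \eqref{eq:statementwithjprime}.

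Finally, for the layers beyond $j'$ I would use the standard worst-case observation that on each piece of an $\R^{N_j}$-valued piecewise-affine function along $\kappa$, each of the $N_j$ components is affine and can cross zero at most once; hence componentwise ReLU inflates the number of pieces by a factor of at most $1+N_j \leq 2N$, and the subsequent affine map is again piece-preserving. Iterating the deterministic inequality $P_{j+1}\leq 2N\,P_j$ for $L-j'$ layers and taking expectations yields $\mathbb{E}[P_L]\leq (2N)^{L-j'}\mathbb{E}[P_{j'}]$, and combining with the previous step gives \eqref{eq:statementwithjprime}. The most delicate point is the additivity of the layerwise recursion: one might naively fear a multiplicative blow-up by $N_j$ in layer $j$, but this is avoided because all per-component preactivation non-affine sets $\omega_{j,k}$ already lie in the \emph{single} non-affine set of the vector-valued $\mathrm{R}(\widehat{\Phi}^\varepsilon_j)$, so only the genuinely new ReLU-induced points $\hat{\omega}_{j,k}\setminus\omega_{j,k}$ contribute a summation over~$k$; once this set-theoretic observation is cleanly in place the rest is a routine combination of Proposition \ref{prop:gd} and Assumption \ref{assum:a}.
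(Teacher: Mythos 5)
Your proposal is correct and follows essentially the same route as the paper's proof: an additive telescoping bound $\mathrm{pieces}(\mathrm{R}(\widehat{\Phi}^\varepsilon_{j'}),\kappa)\leq 1+\sum_{j<j'}\sum_k\#(\hat{\omega}_{j,k}\setminus\omega_{j,k})$ over the first $j'$ layers (which the paper establishes by induction on the union of breakpoint sets, with the same choice $\omega_{j,k}=\bigcup_{k'}\hat{\omega}_{j-1,k'}$), combined with Proposition~\ref{prop:gd}, Assumption~\ref{assum:a}, and a deterministic multiplicative factor $(2N)^{L-j'}$ for the remaining layers (the paper obtains this via Theorem~\ref{thm:1} rather than your equivalent layerwise recursion $P_{j+1}\leq 2NP_j$). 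The one detail you should make explicit is that for neurons with $(u_j^b)_k=0$ the bound of Proposition~\ref{prop:gd} is vacuous, and one must invoke the dead-neuron clause of Assumption~\ref{assum:a} to conclude $\#(\hat{\omega}_{j,k}\setminus\omega_{j,k})=0$ for those $k$ before replacing $|(u_j^b)_k|^{-1}$ by $|(u_j^b)_k|^{\dagger}$ in the sum.
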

\begin{remark}
	To interpret \eqref{eq:statementwithjprime}, we note that depending on the choice of $j'$ the upper bound varies between exponential in $L$ (for $j' = 1$) and linear in $L$ (for $j'= L$). Moreover, if $L-j'$ is fixed, then the whole estimate is a polynomial upper bound on the number of affine pieces in terms of the number of neurons $N$. 
	
	On a theoretical level, this demonstrates that the asymptotic scaling of the number of affine pieces of a NN after one backpropagation step does, in expectation, not scale exponentially with the number of layers. 
	
	For practical purposes, it should be mentioned that this upper bound is void if $\epsilon_j$ is too small for some $j < j'$. However, as described in Remark \ref{rem:EffectiveRelPerturbation}, we expect that, for NNs with many neurons per layer, already for reasonably small $j'$, performing $j'$ matrix-vector multiplications as in \eqref{eq:defuellb} as well as summations over large data sets should result in the accumulation of errors such that 
	the effective perturbations $\epsilon_j$ with  $j < j'$ are sufficiently large.
\end{remark}
Theorem \ref{thm:upperBoundOnPieces} demonstrates that the number of affine pieces that are expected after one gradient descent step is polynomial in the number of layers. 
This is problematic for constructions of NNs that create an exponential number of affine pieces with respect to the number of layers:
\emph{when the effective perturbations are not too small}, it is unlikely that a NN with a number of affine pieces
exponentially large with respect to the number of layers
is found after a single gradient descent step.

\subsection{Generation of affine pieces during full iteration}\label{sec:fulliteration}

We now take a closer look at the implications of Theorem \ref{thm:upperBoundOnPieces} when optimising a NN using gradient descent. As a consequence of Theorem \ref{thm:upperBoundOnPieces} and Markov's inequality for a given line $\kappa$ it holds that
\begin{align}\label{eq:theGoldenEquation}
	\mathbb{P}\left( \mathrm{pieces}(\mathrm{R}(\widehat{\Phi}^\varepsilon), \kappa) \geq 2 \min_{1 \leq j' \leq L} \big(1+   \frac{2 c_0}{\lambda} \, \mathcal{L}(\kappa) \, j' \hat{\varepsilon}_{j'}^{-1} N^\nu \ln(N) \big)  \cdot (2 N)^{L-j'}\right) \leq \frac{1}{2}.
\end{align}
We continue by studying the impact of Equation \ref{eq:theGoldenEquation} when training a NN with the gradient descent algorithm and obtain the following main result.
\begin{theorem}\label{thm:numberofpiecesinfulliteration}
	Let $d \in \N$ and $\Omega \subset [0,1]^d$. 
	 Let $\Phi$ be a NN of depth $L \in \N$ and input dimension $d$ and output dimension $1$, $M \in \N$ and $X=(x_i)_{i=1}^M \subset \Omega, Y = (y_i)_{i=1}^M \subset \R$ be the training samples. 
	Let $\varepsilon >0$ be the effective relative perturbation and $(\lambda_n)_{n \in \N}$ be the sequence of step sizes. Let $(\widehat{\Phi}^{\varepsilon, n})_{n \in \N}$ be the associated \emph{training sequence with initialisation} $\Phi$. 
	Assume that $\kappa \subset [0,1]^d$ is a line.
	
	Then, for each $n\in \N$ satisfying Assumption~\ref{assum:a} with $c_0 = c_0^{(n)}$ and Assumption \ref{assum:b} in the update step of $\widehat{\Phi}^{\varepsilon, n}$, we have that 
	\begin{align*}
		\mathbb{P}\left( \mathrm{pieces}(\mathrm{R}(\widehat{\Phi}^{\varepsilon, n}), \kappa) \geq 2 \min_{1 \leq j' \leq L} \Big(1+  \frac{2 c_0^{(n)}}{\lambda_n} \, \mathcal{L}(\kappa) \, j'
		\,
		\hat{\varepsilon}_{j'}^{-1}
		\,
		N^\nu \ln N \Big)  \cdot (2 N)^{L-j'}\right) \leq \frac{1}{2}
		\, .
	\end{align*}
\end{theorem}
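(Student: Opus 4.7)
The plan is to show that the theorem is essentially a one-step consequence of Theorem~\ref{thm:upperBoundOnPieces} applied at each iteration, combined with Markov's inequality. By Definition~\ref{def:finaldef}, for every $n \in \N$ the neural network $\widehat{\Phi}^{\varepsilon, n}$ is obtained from $\widehat{\Phi}^{\varepsilon, n-1}$ by exactly one perturbed gradient descent update (in the sense of Definition~\ref{def:GD}) with step size $\lambda_{n-1}$ and effective relative perturbation $\varepsilon$. Consequently the hypothesis of Theorem~\ref{thm:upperBoundOnPieces} is satisfied at iteration $n$ whenever Assumptions~\ref{assum:a} and~\ref{assum:b} are satisfied for that update step, which is precisely what we are assuming.

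First, I would fix an arbitrary $n \in \N$ and treat $\widehat{\Phi}^{\varepsilon, n-1}$ as the input NN and $\widehat{\Phi}^{\varepsilon, n}$ as the NN obtained after one backpropagation update. Under the assumptions of the theorem, Theorem~\ref{thm:upperBoundOnPieces} then yields, for every $j' \in \{1, \dots, L\}$,
\begin{equation*}
    \mathbb{E}\bigl(\mathrm{pieces}(\mathrm{R}(\widehat{\Phi}^{\varepsilon, n}), \kappa)\bigr)
    \leq \Bigl(1 + \frac{2 c_0^{(n)}}{\lambda_n}\, \mathcal{L}(\kappa)\, j' \, \hat{\varepsilon}_{j'}^{-1}\, N^\nu \ln N \Bigr) (2N)^{L-j'}.
\end{equation*}
Taking the minimum over $j'$ on the right-hand side preserves the inequality since the left-hand side is independent of $j'$.

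Next, I would apply Markov's inequality to the nonnegative random variable $\mathrm{pieces}(\mathrm{R}(\widehat{\Phi}^{\varepsilon, n}), \kappa)$ with threshold equal to twice the expectation bound. This gives, as in~\eqref{eq:theGoldenEquation},
\begin{equation*}
    \mathbb{P}\Bigl(\mathrm{pieces}(\mathrm{R}(\widehat{\Phi}^{\varepsilon, n}), \kappa) \geq 2 \min_{j' \leq L} \bigl(1 + \tfrac{2 c_0^{(n)}}{\lambda_n}\, \mathcal{L}(\kappa)\, j' \, \hat{\varepsilon}_{j'}^{-1}\, N^\nu \ln N\bigr)(2N)^{L-j'} \Bigr) \leq \frac{1}{2},
\end{equation*}
which is the claim of the theorem.

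There is essentially no serious obstacle here: the statement is a clean corollary of the expectation bound of Theorem~\ref{thm:upperBoundOnPieces} and Markov's inequality. The only points to be careful about are bookkeeping concerns, namely that the hypotheses of Theorem~\ref{thm:upperBoundOnPieces} are checked separately for each iteration $n$ (rather than once and for all), that the step size appearing in the bound is indeed the step size used at iteration $n$, and that the minimum over $j'$ can be passed from inside the expectation bound to inside the probability bound before applying Markov. No additional union bound over $n$ is needed, because the theorem asserts the estimate per iteration rather than simultaneously over all iterations.
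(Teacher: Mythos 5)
Your proposal is correct and coincides with the paper's own (implicit) argument: the paper likewise derives the bound by applying Theorem~\ref{thm:upperBoundOnPieces} to the single update step producing $\widehat{\Phi}^{\varepsilon,n}$ and then invoking Markov's inequality, as recorded in~\eqref{eq:theGoldenEquation}. The bookkeeping points you flag (per-iteration verification of the assumptions, the step-size index, and passing the minimum over $j'$ outside before applying Markov) are exactly the right ones and are handled the same way in the paper.
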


\begin{remark}\label{rem:scalingOfC0}
	As presented in \cite{nemirovskii1983,nemirovski2009}, gradient descent as used in most machine learning applications typically achieves a convergence order of $1/\sqrt{n}$. 	
	
	Consider a step-size rule where $\lambda_n \cong n^{-\alpha}$, where $\alpha \in [0,1]$ e.g., $\alpha = 1/2$ as in \cite[Section 14.4.2]{shalev2014understanding} or $\alpha = 1$ as in \cite{nemirovski2009}.
	
	Considering the square loss, we can assume for a converging training sequence $(\widehat{\Phi}^{\varepsilon, n})_{n\in \N}$ that 
	\begin{align}\label{eq:lowerboundOnRisk}
		\erisk(R(\widehat{\Phi}^{\varepsilon, n}(X), Y)) \gtrsim \frac{1}{{n}} \text{ and }	\erisk(R(\widehat{\Phi}^{\varepsilon, n}(X), Y))  \to 0 \text{ for } n \to \infty.
	\end{align}
	The scaling law of \eqref{eq:lowerboundOnRisk} shows that it is reasonable to assume that 
\begin{align}\label{eq:scalingOfLoss}
		\lambda_n \, \bigg|\frac{1}{M}\sum_{i=1}^M \ell'(y_i, \mathrm{R}(\widehat{\Phi}^\varepsilon_{n-1}))\bigg| \gtrsim n^{-3 /2}
		\, .
\end{align}
	Indeed, assuming the reverse inequality of \eqref{eq:scalingOfLoss} shows by a direct computation (plugging \eqref{eq:scalingOfLoss} into \eqref{eq:defuellbis}) that the weights of the NNs $(\widehat{\Phi}^{\varepsilon, n})_{n\in \N}$ would converge with a rate faster than $n^{-1/2}$. This, by the local Lipschitz property of the realisation function \cite[Proposition 4.1]{petersen2021topological} yields a pointwise convergence of $(\mathrm{R}(\widehat{\Phi}^{\varepsilon, n}))_{n\in \N}$ to a limit with a rate faster than $n^{-1/2}$. Due to the quadratic dependence of the square loss on the pointwise error, this would violate \eqref{eq:lowerboundOnRisk}. 
	
	As a consequence of \eqref{eq:scalingOfLoss} as well as the linearity of \eqref{eq:defuellb} and \eqref{eq:defuellbis}, it is sensible to claim that $c_0^{(n)}$ of Assumption \ref{assum:a} should scale as $(n^{-3/2} / \lambda_n)^{-1} = n^{\alpha + 3/2}$
	with respect to
	$n \in \N$. For other loss functions than the square loss, similar arguments can be made, that may yield different exponents depending on the modulus of continuity of the loss.
	
	Based on \eqref{eq:defuellbis}, we, therefore, conclude that an admissible scaling law for Assumption \ref{assum:a} for $n \in \N$ is 
	\begin{align*}
		\lambda_n^{-1}c_0^{(n)} \lesssim n^{3/2}.
	\end{align*}

	If we assume a different lower bound for the convergence of gradient descent than $1/\sqrt{n}$, which could be sensible in special cases, then the scaling of $c_0^{(n)}$ would change accordingly.
	\end{remark}
	
	We observe that, if $\lambda_n c_0^{(n)}$ scales polynomially with respect to $n \in \N$, then Theorem~\ref{thm:numberofpiecesinfulliteration} states that NNs
	obtained by training with
	gradient descent, with probability $1/2$, will not
	have a number of affine pieces exponentially large
	with respect to the number of layers, unless an exponential number of iterations is performed. Remark~\ref{rem:scalingOfC0}
	shows
	that $c_0^{(n)}$ can be assumed to grow not faster than $n^{3/2}$
	with respect
	to $n \in \N$.

	Since, by construction, the NNs $(\widehat{\Phi}^{\varepsilon, n})_{n \in \N}$ are independent random variables, it holds that \emph{the event that the sequence $(\mathrm{R}(\widehat{\Phi}^{\varepsilon, n}))_{n \in \N}$ maintains many affine pieces for a long series of gradient descent steps is highly unlikely}.

	\section{Experimental studies} \label{sec:numExp}

\subsection{Size of the effective relative perturbation}

In Definition \ref{def:GD}, we assume that the gradient descent updates are perturbed by an effective relative perturbation $\varepsilon$. 
Theorem \ref{thm:upperBoundOnPieces} then shows that the expected number of pieces is bounded by the reciprocal of $\varepsilon$. Therefore, we would like to assume that $\varepsilon$ is not extremely small. In 
Remark \ref{rem:EffectiveRelPerturbation}, we argue that it is sensible to assume that $\varepsilon$ is substantially larger than the machine precision, especially in those coordinates that correspond to lower layers. In this subsection, we present numerical results that support this claim in practice. 

\begin{figure}[htb]
	\centering
	\includegraphics[width = 0.49\textwidth]{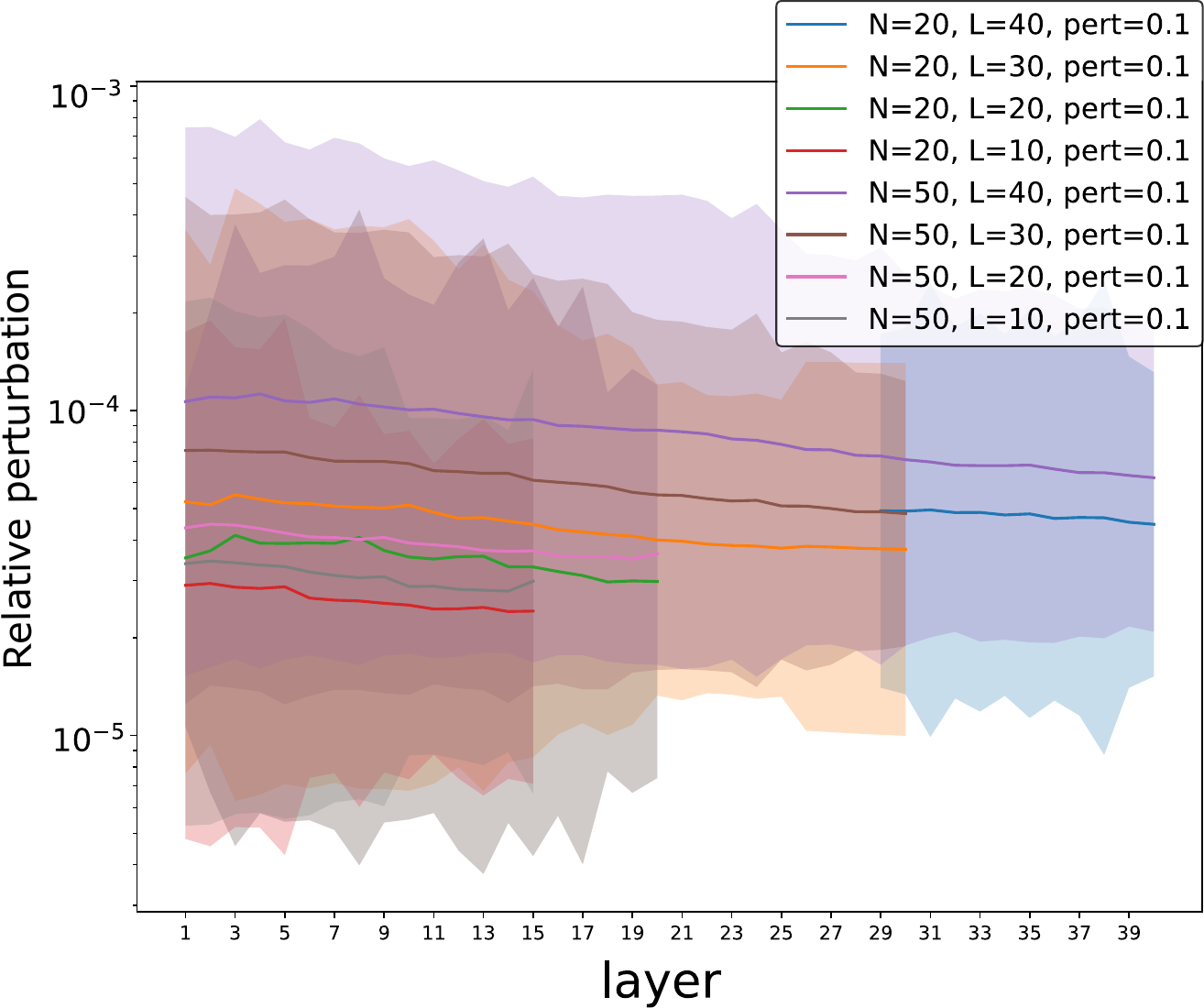}
	\includegraphics[width = 0.49\textwidth]{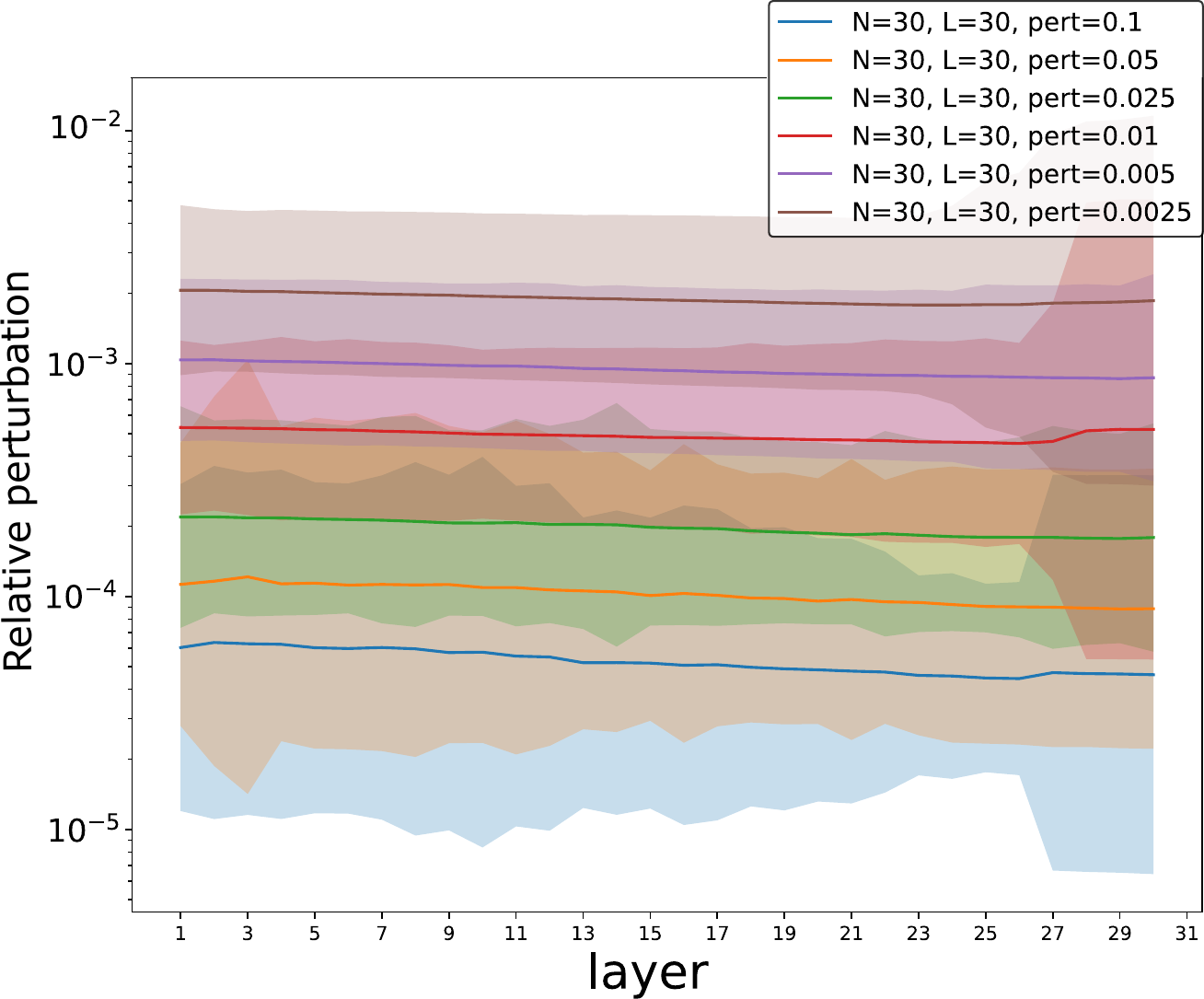}
	\caption{Mean size of $\varepsilon = (\varepsilon_j)_{j=1}^L$ for the experiment described above for a variety of architectures and perturbations. All results are averaged over 10000 runs, relative change of one standard deviation is shown as shaded area.}
	\label{fig:expVareps}
\end{figure}

We carry out the following experiment. We initialize a random neural network according to the He initialization \cite{he2015delving} and carry out one gradient descent step over a random training set of 100 data points. The training data are chosen randomly but such that they result in a $\ell_2$-norm perturbation of the data points of the size of $pert$.

We compute the forward and backward propagation as well as the computation of the involved means using computations with single precision and compare the relative error when computing the gradients associated to the bias vectors with an update using double precision. 
Then, for each layer, we compute the median value of the relative errors. This instead of taking the mean, is done so that the error will not be dominated by a few very large updates. 

The results are averaged over 10000 runs and shown in Figure  \ref{fig:expVareps}. In addition to the mean value over 10000 runs, a shaded region corresponding to one relative standard deviation is shown.
 
We observe that, the computation in single precision loses up to five orders of magnitude against its machine precision, which is of the order of $1e-8$. Architectures with more layers lead to higher errors. Also the relative error, seems to scale inversely with the norm of the loss.

Overall, the claim that $\varepsilon$ can often be considerably larger than the underlying machine precision appears warranted.

\subsection{Assumptions on gradient descent}\label{sec:GradientDescentAssumptions}

	\begin{figure}[htb]
	\centering
	\includegraphics[width = 0.7\textwidth]{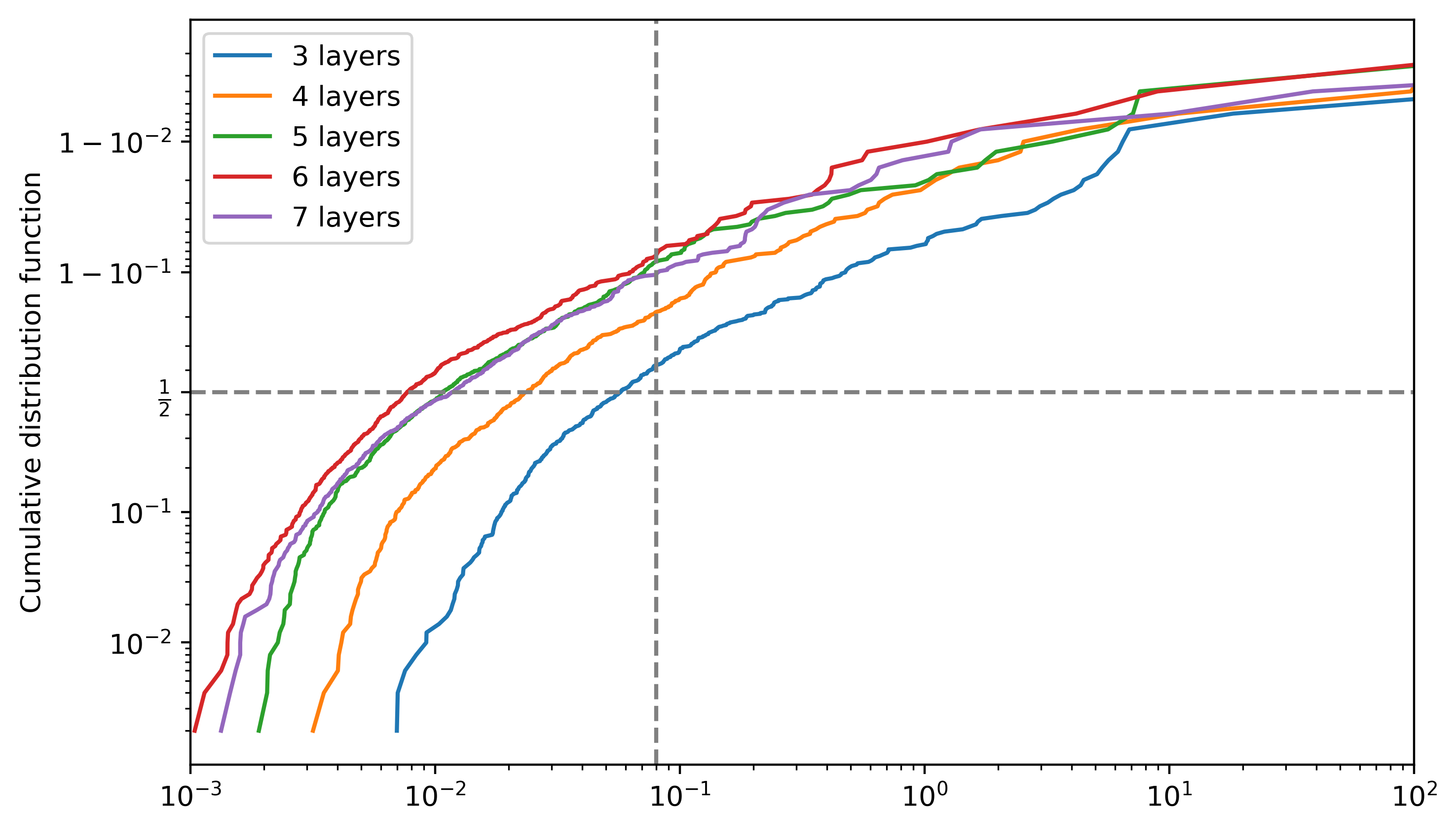}
	\put(-200, -10){$N^{-2}\,\sum_{j=1}^{L-1} \sum_{k=1}^{N_\ell} |(u_j^b)_k|^{\dagger}$}
	\caption{Cumulative distribution function for results of the experiment to estimate the value of \eqref{eq:theseAreTheValuesWeCareAbout} in practice. In about half of the cases the value \eqref{eq:theseAreTheValuesWeCareAbout} is bounded by $0.1$.}
	\label{fig:experiment1}
\end{figure}

\begin{figure}[htb]
	\centering
	\includegraphics[width = 0.4\textwidth]{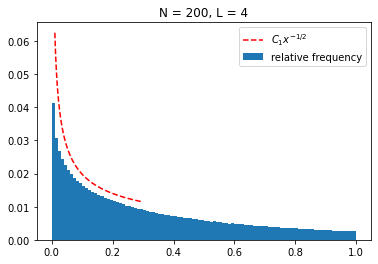} \  \includegraphics[width = 0.4\textwidth]{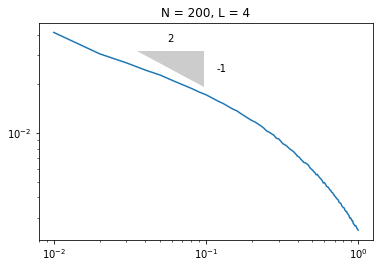}
	
	\includegraphics[width = 0.4\textwidth]{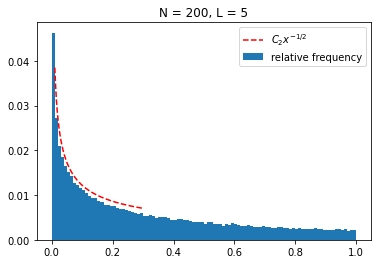} \ \includegraphics[width = 0.4\textwidth]{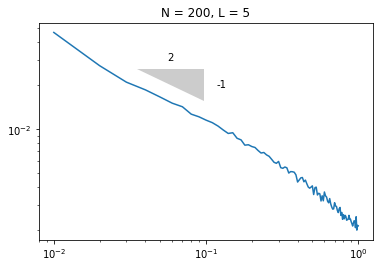}
	
	\includegraphics[width = 0.4\textwidth]{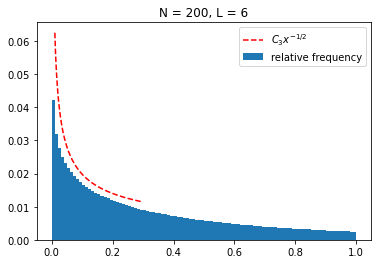} \ \includegraphics[width = 0.4\textwidth]{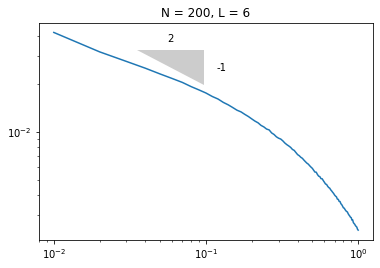}
	\caption{%
	Relative frequencies of the bias updates $(u_j^b)_k$ computed over 10'000 neural network instantiations
	with $N = 200$ neurons per layer and, from top to bottom, $L = 4,5,6$ layers. We only include updates with absolute value less than one since we are interested in the asymptotic behaviour at zero. The constants $C_1,C_2,C_3$ in the plots are $1/160, 1/270, 1/160$, respectively. The right panel shows the same distribution in a log-log plot and indicates that the density of the bias updates has a mild algebraic pole, of order $1/2$, at the origin
		since the slope of the distribution appears to be not steeper than $-1/2$.}
	\label{fig:behaviouratzero}
\end{figure}

	To verify the sensibility of Assumption \ref{assum:a}, we numerically check if the gradient descent update of the biases of a randomly initialised NN is on average appropriately far away from zero.
	For $L = 3, 4, 5, 6, 7$, we construct random NNs with $L$ layers and input and output dimensions equal to $1$ and with various numbers of neurons between $100$ and $400$ per layer. We initialise the NNs according to the He initialiser \cite{he2015delving}. Next, we choose $500$ random training points $(x_i)_{i=1}^{500}$ uniformly in $[0, 2\pi]$ and associate as labels $(\sin(x_i))_{i=1}^{500}$. We perform one step of gradient descent and collect the size of the updates of the bias vectors. 
	Finally, we compute for each NN the value 
	\begin{align}\label{eq:theseAreTheValuesWeCareAbout}
			\frac{1}{N^2}\sum_{j=1}^{L-1} \sum_{k=1}^{N_\ell} |(u_j^b)_k|^{\dagger},
	\end{align}
	where $N\in \N$ is the total number of neurons of the NN and $(u_j^b)_k$ are the updates of the bias vectors as in Assumption \ref{assum:a}. The term \eqref{eq:theseAreTheValuesWeCareAbout} corresponds to $\nu = 2$ in Assumption \ref{assum:a}. 
	
	In Figure \ref{fig:experiment1}, we depict for each number of layers, the resulting values of \eqref{eq:theseAreTheValuesWeCareAbout}. More precisely, for number of layers $L$ and for each number $N_1 \in \{100, 110, \dots, 390, 400\}$, we initialise $500$ neural networks with architecture $(1, N_1, \dots, N_1, 1)$ and compute the associated values of \eqref{eq:theseAreTheValuesWeCareAbout}. We average these values over the 500 runs. Then, we depict in Figure \ref{fig:experiment1}, the maximum value of these averages for all values $N_1$. We observe that, in at least half of the cases the values \eqref{eq:theseAreTheValuesWeCareAbout}, are bounded by $c_0 < 0.1$.

	Theoretically, we can also justify Assumption \ref{assum:a}, if we assume that all bias updates are independent identically distributed random variables with compactly supported densities $\sigma$ with $\sigma(x) \in \mathcal{O}(x^{-(1-\alpha)})$ for $x \to 0$, where $\alpha \in (0,1)$.
	
	In this case, we have that for a $q > 1/\alpha$ with $q \in \N$, 
	\begin{align*}
		\left(\frac{1}{N^{q}}\sum_{j=1}^{L-1} \sum_{k=1}^{N_\ell} |(u_j^b)_k|^{\dagger}\right)^{1/q}  \leq \frac{1}{N}\sum_{j=1}^{L-1} \sum_{k=1}^{N_\ell} \left(|(u_j^b)_k|^{\dagger}\right)^{1/q}
	\end{align*}
	by the binomial theorem. Moreover, for large $N \in \N$, we will have by the law of large numbers that with high probability
	\begin{align*}
		\frac{1}{N}\sum_{j=1}^{L-1} \sum_{k=1}^{N_\ell} \left(|(u_j^b)_k|^{\dagger}\right)^{1/q} \approx \int_{(0, \infty)} x^{-1/q} \sigma(x) dx \lesssim \int_{(0, c)} x^{-1/q + \alpha - 1} dx < \infty,
	\end{align*}
	where $c$ is such that $\suppp \sigma \subset [0, c]$ and we used that $-1/q + \alpha>0$.	
	
	We can study the distribution of $(u_j^b)_k$ numerically to see if it, indeed, has a mild pole at zero. We use the same setup as in the experiment of Figure \ref{fig:experiment1} and observe in Figure \ref{fig:behaviouratzero} that the distribution $\sigma$ of $(u_j^b)_k$, {with} some $\delta >0$, appears to satisfy $\sigma(x) \simeq x^{-1/2 + \delta}$ as $x \to 0$. By the previous argument, this implies that Assumption \ref{assum:a} should hold with $\nu = 2$. This estimate coincides with the findings of the experiment that was shown in Figure \ref{fig:experiment1}. 
		
Note that the assumption that the gradient updates have a density with a mild pole is quite mild. Consider for example, a random neural network with all weight matrices assumed Gaussian. If we replace all ReLUs by identities, then we would conclude that the bias updates were normally distributed. Since the normal distribution has a bounded density, the assumptions would be satisfied in this case. The application of the ReLU activation function will set some updates to $0$. This, however, is inconsequential for our analysis due to our convention $0^{\dagger} = 0$.

\section{Experimental study of the main theorem} \label{sec:numExpMain}
	
\begin{figure}[tb]
	\includegraphics[width  =0.45\textwidth]{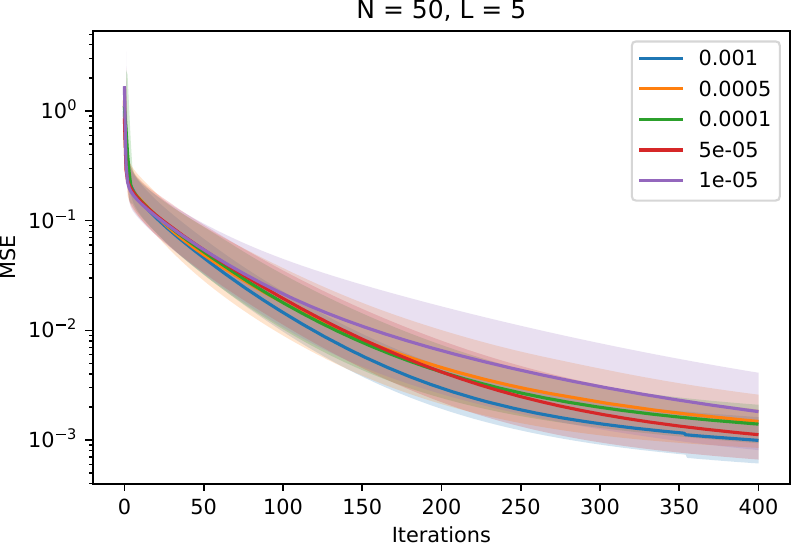} \includegraphics[width  =0.45\textwidth]{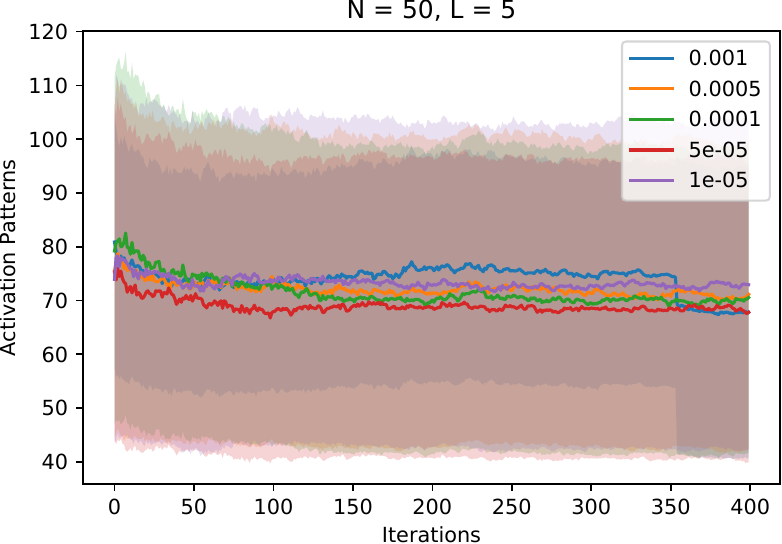}
	
	\includegraphics[width  =0.45\textwidth]{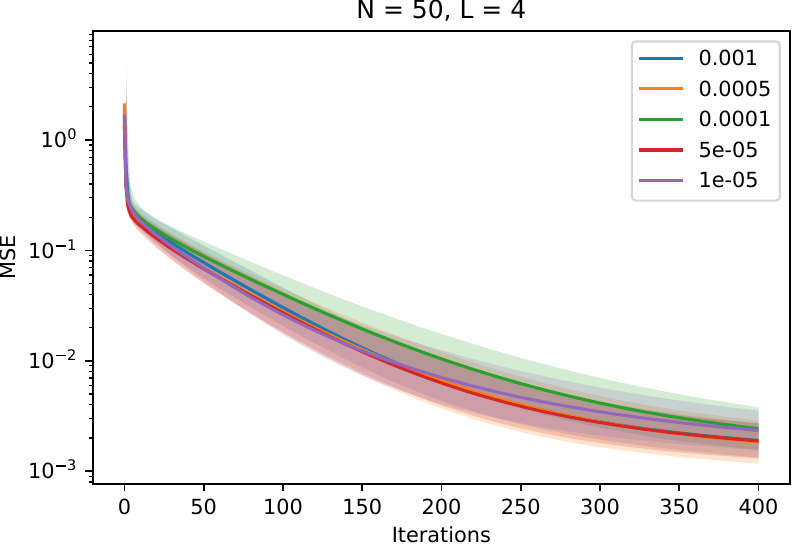} \includegraphics[width  =0.45\textwidth]{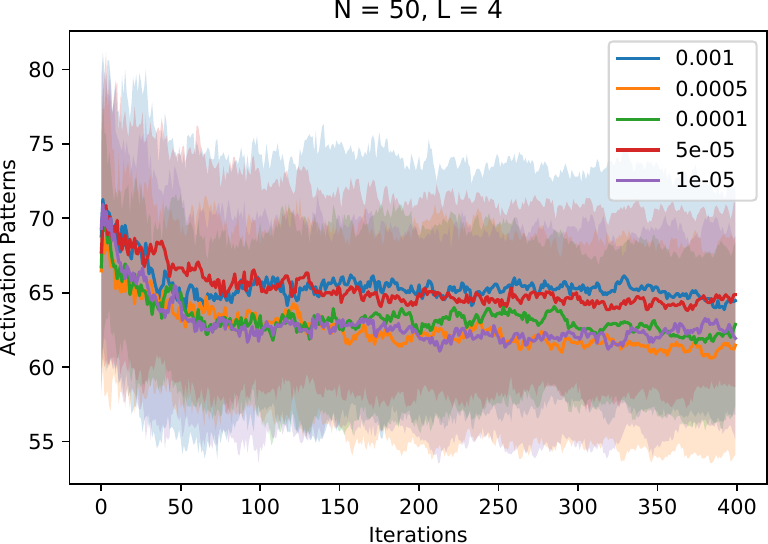}
\caption{Approximation of the function $x \mapsto 1-x^2/2$ on $[0,1]$ by neural networks with $50$ neurons per layer and $5$ layers (top row) and $50$ neurons per layer and $5$ layers (bottom row). Each matrix-vector multiplication during training is perturbed by relative noise of size $\{10^{-5}, 5\cdot 10^{-5}, 10^{-4}, 5\cdot 10^{-4}, 10^{-3} \}$. All results are averaged over fifteen runs, relative change of one standard deviation is shown as shaded area.} \label{fig:experiment1x2}
\end{figure}

\begin{figure}[tb]
\includegraphics[width  =0.45\textwidth]{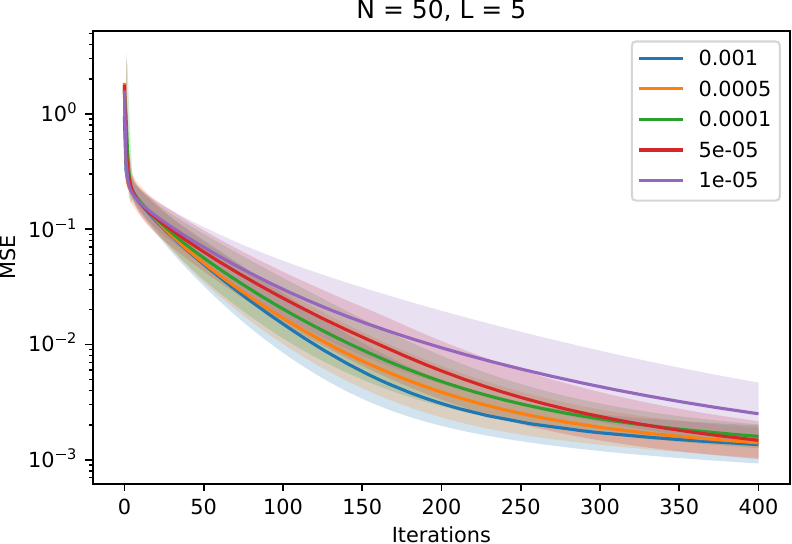} \includegraphics[width  =0.45\textwidth]{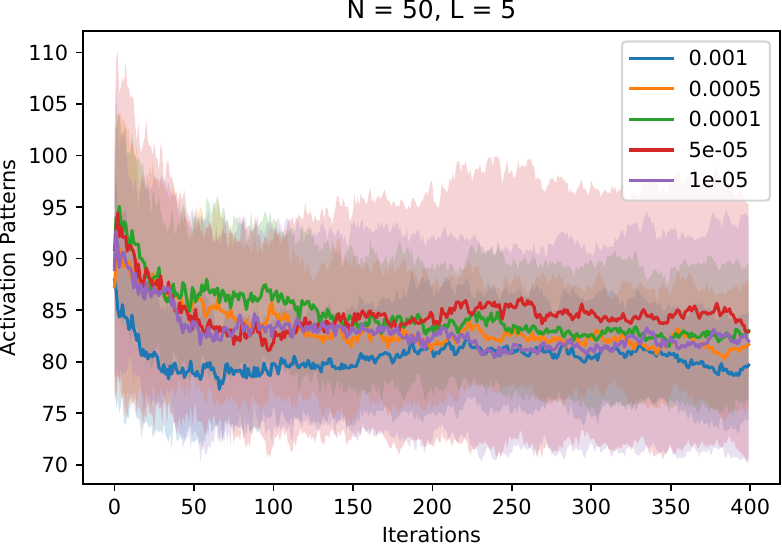}

\includegraphics[width  =0.45\textwidth]{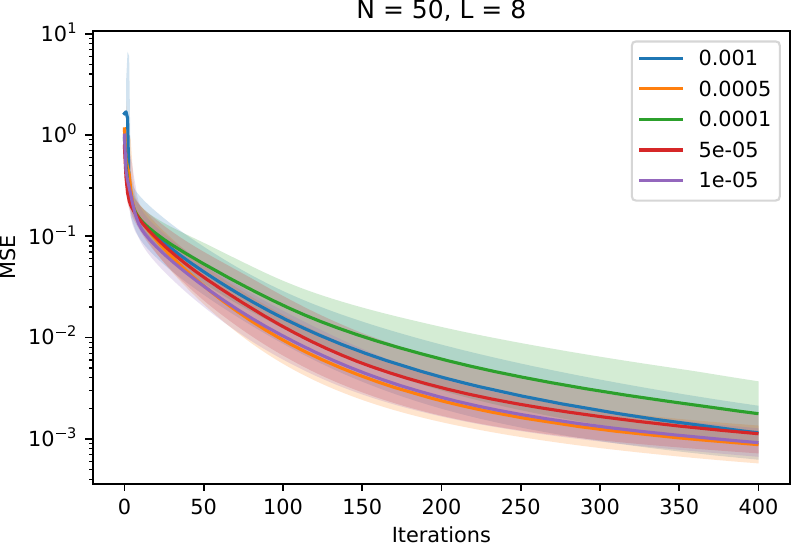} \includegraphics[width  =0.45\textwidth]{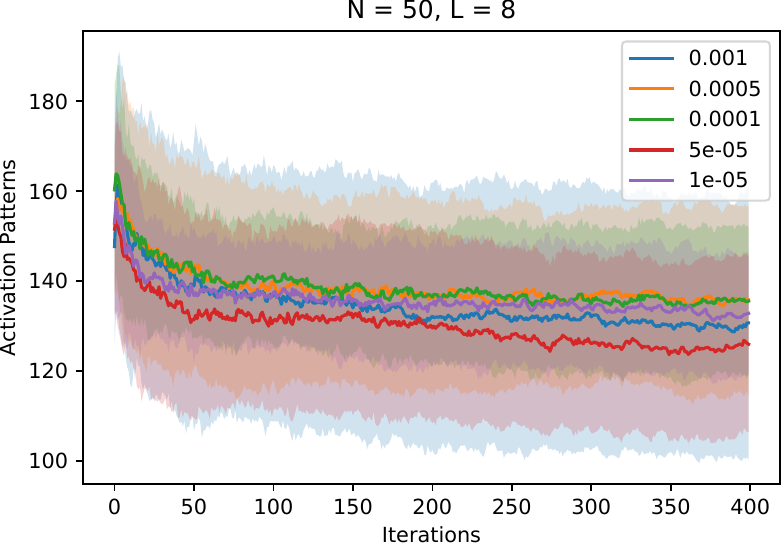}
\caption{Approximation of the cosine function on $[0,1]$ by neural networks with $50$ neurons per layer and $5$ layers (top row) and $50$ neurons per layer and $8$ layers (bottom row). Each matrix-vector multiplication during training is perturbed by relative noise of size $\{10^{-5}, 5\cdot 10^{-5}, 10^{-4}, 5\cdot 10^{-4}, 10^{-3} \}$. All results are averaged over fifteen runs, relative change of one standard deviation is shown as shaded area.} \label{fig:experiment1sin}
\end{figure}

\begin{figure}[tb]
	\includegraphics[width = 0.45\textwidth]{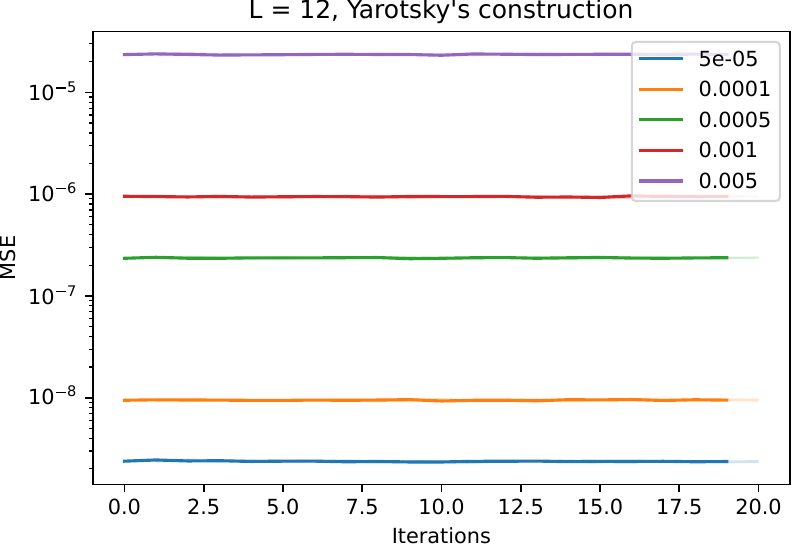} \includegraphics[width = 0.45\textwidth]{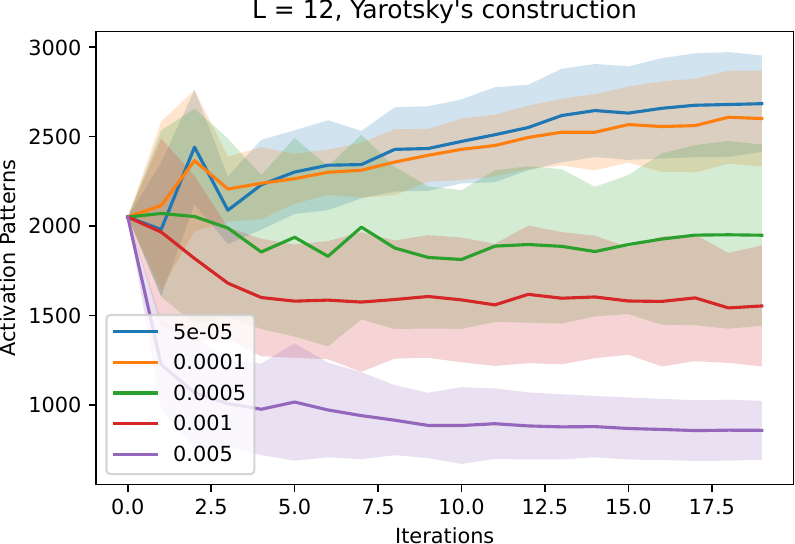}
	
	\includegraphics[width = 0.45\textwidth]{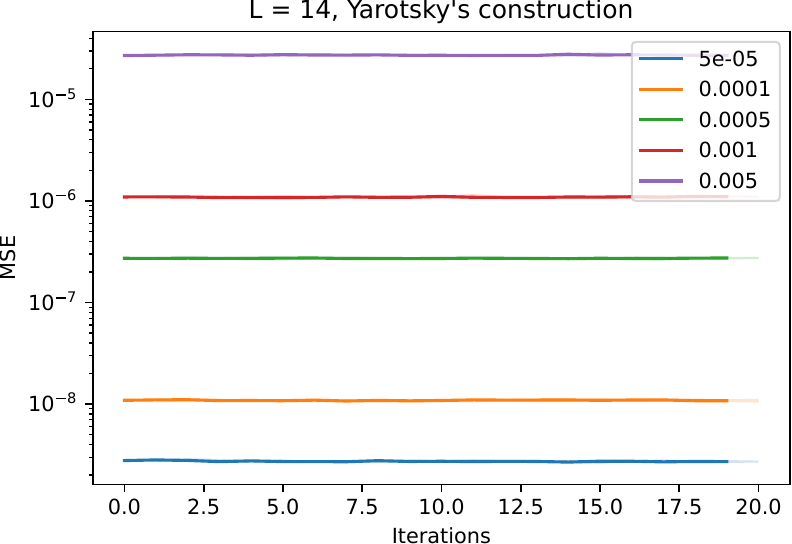} \includegraphics[width = 0.45\textwidth]{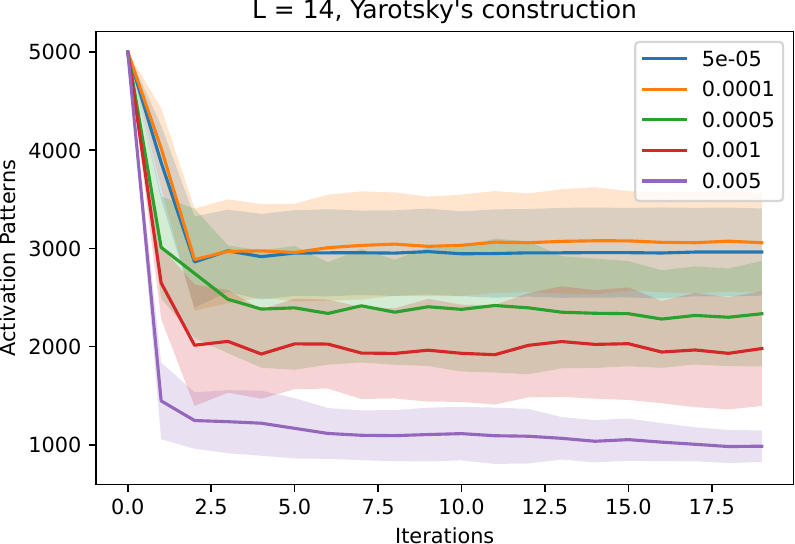}
	\caption{Training of neural networks initialised with many pieces according to \eqref{eq:yarostkyConstruction}. In the top row 12 layers are used, in the bottom row 14 layers are used. Each matrix-vector multiplication during training is perturbed by relative noise of size $\{5\cdot 10^{-5}, 10^{-4}, 5\cdot 10^{-4}, 10^{-3}, 5\cdot 10^{-3} \}$.
	All results are averaged over fifteen runs; the relative change of one standard deviation is shown as shaded area.} \label{fig:YarotskyNeuralNetwork}
\end{figure}

	In this section, we study the number of pieces of NN realisations in practical scenarios as well as the effect of the accuracy in floating-point computations. 
	Computing the number of pieces of a NN by identifying regions with gradients differing by a threshhold is extremely sensitive to the choice of the threshhold, and therefore does not yield reliable results. 	 
	Instead, we will count so-called activation regions, which 
	were considered as a proxy for affine pieces in \cite{Hanin2019}. 
	Indeed, the number of activation regions is an upper bound on the number of affine pieces.
	We refer to \cite[Definition 1]{Hanin2019} for the definition of activation regions and \cite[Lemma 3]{Hanin2019} for the connection between activation regions and affine pieces.

	In our first experiment, depicted in Figures \ref{fig:experiment1x2} and \ref{fig:experiment1sin}, we train NNs with various numbers of layers and neurons to approximate the functions $x \mapsto 1-x^2/2$ and $x\mapsto \cos(x)$, respectively. For that, the mean square loss on $500$ randomly chosen training points is minimised via gradient descent. As a step size, we use $\lambda_n = 0.02/(1+\sqrt{n}/8)$ at iteration $n$. This step size rule was empirically found to yield the fastest convergence. The NNs are randomly initialised according to the He initialisation \cite{he2015delving}. 
	In the computation of the gradient as well as when evaluating the NNs, we relatively perturb the output of each matrix-vector multiplication with relative noise the amplitude of which will be specified below.

	In Figures \ref{fig:experiment1x2} and \ref{fig:experiment1sin}, we observe that the number of activation regions \emph{always stays far below the upper bound of Theorem \ref{thm:numberofpiecesinfulliteration}}. However, there is no clear effect of the accuracy in the computations. 
	Note that, in the regime far away from the upper bound of Theorem \ref{thm:numberofpiecesinfulliteration}, it is conceivable that noise in the updates of the biases can even create new affine pieces, since, in this case, as seen in Remark \ref{rem:ProbEstimateOnNumberOfPieces}, the generation of affine pieces is not necessarily a low probability event. Hence, it should not be surprising that no clear effect of the level of numerical accuracy is visible in this regime. 
	Moreover, we expect that in addition to the accuracy of the computations, there are further reasons which prevent the creation of many pieces in practice. 
	To make the effect of the accuracy in floating-point arithmetic visible, we need to enter a regime, where the underlying NNs already have a lot of affine pieces and then observe the effect of the accuracy on the number of pieces. 
	We propose the following example:
	As famously shown in \cite[Proposition 2]{Yarotsky2017}, the function $x \mapsto x^2$ can be very efficiently approximated by ReLU NNs. The NN with $L$ layers and four neurons per layer given by
	\begin{align}\label{eq:yarostkyConstruction}
		b_\ell &\coloneqq 4^{1-\ell} \left(0, 0, -1, -2 \right)^T \text{ for } \ell = 1, \dots, L-1, \qquad b_L \coloneqq 0,\\
		A_1 &\coloneqq \left(1, 2, 2, 2\right)^T, \qquad  A_\ell \coloneqq \left(\begin{array}{cccc}
			1 & -1 & 2 & -1\\
			0 & -1/2 & 1 & -1/2\\
			0 & -1/2 & 1 & -1/2\\
			0 & -1/2 & 1 & -1/2
		\end{array}\right) \text{ for }\ell = 2, \dots, L-1, \\
	A_L &\coloneqq (1, -1 , 2, -1), \nonumber
	\end{align}
	satisfies $|\mathrm{R}(\Phi)(x) - x^2| \leq 4^{-L}$ for all $x \in [0,1]$ and has $2^{L-1}$ affine pieces.
	
	We initialise a NN in the way above, and train it with step size $\lambda_n = 0.1/(1+\sqrt{n})$, for $n = 1, \dots, 200$, on training data which is comprised of 5'000 uniformly randomly sampled elements of $[0,1]$ with labels described by the function $x \mapsto c x^2$, where $c = 0.99999$. Note that, simply changing $A_L$ to $\widetilde{A}_L = c \cdot (1, -1 , 2, -1)$ would already achieve zero training error. Hence, the target function is very close to the initialisation. 
	In this example, we \emph{clearly observe the effect of the numerical accuracy}, with high accuracy computations maintaining activation regions much better and in some cases even increasing the number of activation regions further. The results are depicted in Figure \ref{fig:YarotskyNeuralNetwork}.

	We observe in Figure \ref{fig:YarotskyNeuralNetwork} that if higher accuracy computations are used, then fewer activation regions are lost during the iteration compared to iterations with low accuracies.
	In all iterations, the mean squared error does \emph{not} significantly change during the iteration.

	\bibliographystyle{abbrv}
	\bibliography{references}
	\appendix
	
\section{Appendix}

\subsection{Example of unstable neural network}\label{sec:exUnstable}

Here, we introduce the notion of a \emph{finite precision realisation} of a given neural network $\Phi$ and give an example of how numerical instability can affect such NNs.
We assume the floating-point addition and multiplication to be defined according to~\eqref{eq:floating-point-ops},
where
the the minima are well defined since $\mathbb{M}$ has no accumulation points.
Further, we
consider the floating-point realisation of matrix-vector multiplication
that is given by
$$
(A \times_{\mathbb{M}} b)_j \coloneqq (A_{j,1} \times_{\mathbb{M}} b_1) +_{\mathbb{M}} (A_{j,2} \times_{\mathbb{M}} b_2) +_{\mathbb{M}} \dots +_{\mathbb{M}} (A_{j,n} \times_{\mathbb{M}} b_n),
$$
for any matrix $A \in \R^{m\times n}$ and a vector $b \in \R^n$ with $m,n\in\N$,
where  $j = 1, \dots m$ and the summation is performed from left to right.
Next, we define the finite precision realisation of a neural network.
\begin{definition}
	Let $\mathbb{M} \subset \R$ be a set without accumulation points. For a neural network $\Phi$ with input dimension $d\in \N$, $L \in \N$ layers, output dimension $N_L \in \N$ and an activation function $\varrho_{\mathbb{M}}\colon \mathbb{M} \to \mathbb{M}$, we define the \emph{finite precision realisation} by 
	\begin{align*}
		\mathrm{R}_{\mathbb{M}}(\Phi) \colon {\R^d} \to \mathbb{M}^{N_{L}}, x \mapsto x^{(L)},
	\end{align*}
	where $x^{(L)}$ is defined as 
	\begin{equation}
		\begin{split}
			x^{(0)} &\coloneqq x, \\
			x^{(j)} &\coloneqq \varrho_{\mathbb{M}}(A_{\ell} \times_{\mathbb{M}} x^{(j-1)} +_{\mathbb{M}} b_\ell) \quad \text{ for } j = 1, \dots, L-1,\\
			x^{(L)} &\coloneqq A_{L} \times_{\mathbb{M}} x^{(L-1)} +_{\mathbb{M}} b_{L}.
		\end{split}
		\label{eq:NetworkSchemeFinitePrecision}
	\end{equation}
\end{definition}
\begin{remark}
	Observe that in this section we only require the operations of the NNs to be elements of $\mathbb{M}$ but not the weights. This greatly simplifies the mathematical argument while still allowing NNs to exhibit instabilities.
\end{remark}

Having defined finite precision operations we continue by presenting a neural network $\Phi$ such that for every non-negative input $x$ it holds that
$$
|\Realization(\Phi)(x) - \Realization_{\mathbb{M}}(\Phi)(x)| = |\Realization(\Phi)(x)|= |x|.
$$
This implies that the relative error of passing to the finite precision network is 1. 


To illustrate numerical instability, we introduce the following neural network $\Phi$: Let $a \geq 1$, $N, L \in \N$ and
\begin{align*}
	A_1 = \left(\begin{array}{c}
		1\\
		a\\
		\vdots\\
		a
	\end{array}\right), \qquad
	A_{\ell} = \left(\begin{array}{cccc}
		1 & 0 & \dots & 0\\
		0 & a & \dots & a\\
		0 & \vdots & \ddots & \vdots\\
		0 & a & \dots & a
	\end{array}\right), \qquad  \text{ for } \ell =2, \dots L-3
\end{align*}
and
\begin{align*}
	A_{L-2} = \left(\begin{array}{cccc}
		1 & 0 & \dots & 0\\
		0 & a & \dots & a
	\end{array}\right), \qquad 
	A_{L-1} = \left(\begin{array}{cc}
		1 & a \\
		0 & a\\
	\end{array}\right), \qquad A_{L} = \left(\begin{array}{cc}
		1 & -1
	\end{array}\right).
\end{align*}
Furthermore, all biases satisfy $b_{\ell} = 0$ for $\ell =1, \dots, L$. We call the neural network defined above $\Phi_{a, N, L}$, an illustration is provided below in Figure \ref{fig:unstableNN}.
\begin{figure}[htb]
	\centering
	\includegraphics[width = 0.7\textwidth]{./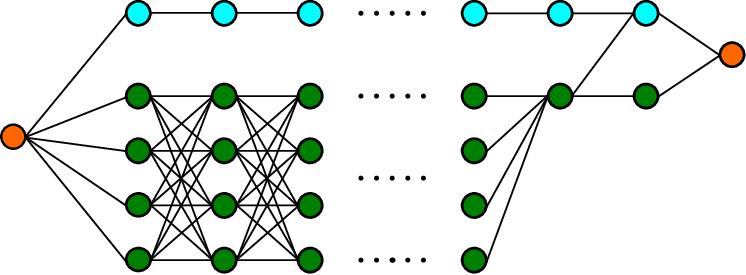}
	\normalsize
	\put(-285,32){$a$}
	\put(-285,46){$a$}
	\put(-285,58){$a$}
	\put(-285,70){$a$}
    \put(-285,95){$1$}
    \put(-245,115){$1$}
    \put(-209,115){$1$}
    \put(-102,115){$1$}
    \put(-66,115){$1$}
    \put(-25,103){$1$}
    \put(-28,77){-$1$}
    \put(-63,78){$a$}
    \put(-63,98){$a$}
    \put(-108,77){$a$}
    \put(-108,60){$a$}
    \put(-108,43){$a$}
    \put(-108,26){$a$}
	\caption{The unstable neural network $\Phi_{a, N, L}$. As defined earlier, all weights between green neurons equal $a$.}
	\label{fig:unstableNN}
\end{figure}

\begin{proposition}\label{prop:unstablenetComplete}
	Let $a = 2^r \geq  1$ for $r \in \Z$ and let $\mathbb{M}$ be as in \eqref{eq:defOfM}, with $e_{\min } \leq 0, e_{\max } > 53+r$, $p= 53$ and hence $\eps = 2^{-53}$. Furthermore, let $\varrho(x) = \max\{0,x\}$ and $\varrho_{\mathbb{M}}(z) \coloneqq \varrho(z)$ for all $z \in \mathbb{M}$. Additionally, let $L\in \N$ and $N \in \N$ be such that $(L-3)\log_{2}(N-1) + (L-1)\log_{2}(a-2\eps) \geq 54$, then the neural network $\Phi_{a, N,L}$ as defined above satisfies 
	$$
	|\Realization(\Phi_{a, N,L})(x) - \Realization_{\mathbb{M}}(\Phi_{a, N,L})(x)| = |\Realization(\Phi_{a, N,L})(x)|= |x|
	$$
	for all $x \in  \mathbb{M}$ such that $x \geq 0$.
\end{proposition}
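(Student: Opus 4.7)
The plan is to establish both $\Realization(\Phi_{\lambda,N,L})(x) = x$ and $\Realization_{\mathbb{M}}(\Phi_{\lambda,N,L})(x) = 0$, from which the two stated equalities follow immediately. The mechanism behind the vanishing of the finite-precision output is a catastrophic cancellation in the last two layers: the small addend $x$ is absorbed into a much larger computed quantity that is then subtracted away.

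For the exact realisation I would simply run the forward pass. After layer $1$, the state is $(x, \lambda x, \dots, \lambda x)^T \in \R^N$. By induction on $\ell \in \{2, \dots, L-3\}$, each of the last $N-1$ coordinates becomes $\lambda$ times the sum of the previous last $N-1$ coordinates, yielding the state $(x, \lambda^\ell(N-1)^{\ell-1} x, \dots, \lambda^\ell(N-1)^{\ell-1} x)^T$. The layer $A_{L-2}$ collapses this to $(x, \lambda^{L-2}(N-1)^{L-3} x)^T$, the layer $A_{L-1}$ produces $(x + \lambda^{L-1}(N-1)^{L-3} x,\; \lambda^{L-1}(N-1)^{L-3} x)^T$, and the final row $A_L = (1,-1)$ subtracts them to give $x$. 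All intermediate entries are nonnegative, so $\varrho$ acts as the identity throughout.

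For the finite-precision realisation, the crucial observation is that $\lambda = 10^r$ is exactly representable in $\mathbb{M}$ and, in radix $b=10$, multiplication by $\lambda$ merely shifts the exponent, so $\lambda \times_{\mathbb{M}} z = \lambda z$ whenever $\lambda z$ is in range. Hence all rounding error arises from the summations $y +_{\mathbb{M}} y +_{\mathbb{M}} \cdots +_{\mathbb{M}} y$ of $N-1$ copies of a nonnegative value $y$. Letting $\tilde{y}_\ell$ denote the computed large coordinate at layer $\ell$ and using $a +_{\mathbb{M}} b \geq (1-\eps)(a+b)$ on each partial sum, I would prove by induction on the number of summands that the floating-point sum of $N-1$ copies of $y \geq 0$ is at least $(N-1)y(1-\eps)^{N-2}$, and hence that $\tilde{y}_\ell \geq \lambda(N-1)(1-\eps)^{N-2}\tilde{y}_{\ell-1}$ for every $\ell \in \{2,\dots,L-2\}$. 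Iterating from $\tilde{y}_1 = \lambda x$ across the $L-3$ intermediate layers yields a lower bound on $\lambda\tilde{y}_{L-2}$ of the form $\lambda^{L-1}(N-1)^{L-3}(1-\eps)^{(N-2)(L-3)} x$, which I would then need to massage into the cleaner form $(\lambda - 2\eps)^{L-1}(N-1)^{L-3} x$ matching the hypothesis $(L-3)\log_{10}(N-1) + (L-1)\log_{10}(\lambda - 2\eps) \geq 16$, so that $\lambda \tilde{y}_{L-2} \geq 10^{16} x$. This reconciliation is the most delicate step, but it reduces to an elementary numerical inequality for $\lambda = 10^r \geq 1$ and the stated precision.

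Once $\lambda \tilde{y}_{L-2} \geq 10^{16} x$ is established, the conclusion is immediate. In layer $L-1$, the first output coordinate is $x +_{\mathbb{M}} (\lambda \times_{\mathbb{M}} \tilde{y}_{L-2}) = x +_{\mathbb{M}} (\lambda \tilde{y}_{L-2})$. Since $\eps = 5 \cdot 10^{-16}$ gives $x \leq \lambda \tilde{y}_{L-2} / 10^{16} = \eps \lambda \tilde{y}_{L-2} / 5$, the addend $x$ lies strictly below the round-to-nearest half-gap at scale $\lambda \tilde{y}_{L-2}$, so the sum rounds exactly to $\lambda \tilde{y}_{L-2}$. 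The second output coordinate equals $\lambda \tilde{y}_{L-2}$ directly; both are nonnegative, so $\varrho$ leaves them unchanged. The last layer $A_L = (1,-1)$ then subtracts two identical floating-point numbers, producing exactly $0$. This gives $\Realization_{\mathbb{M}}(\Phi_{\lambda,N,L})(x) = 0$, and together with $\Realization(\Phi_{\lambda,N,L})(x) = x$ yields both stated equalities.
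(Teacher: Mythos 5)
Your proposal follows the same route as the paper's own proof: the exact realisation is the identity because all intermediate states are nonnegative; in finite precision, multiplication by $\lambda=10^{r}$ is an exact exponent shift in radix $10$, so all error comes from the row summations; one lower-bounds the large coordinate entering layer $L-1$, shows that adding the tiny $x$ to it rounds back to the large number itself, and concludes that $A_L=(1,-1)$ subtracts two identical floating-point numbers to give exactly $0$. The only substantive deviation is your error bookkeeping: you bound each row sum below multiplicatively by $(N-1)y(1-\eps)^{N-2}$ (arguably a cleaner accounting than the paper's additive per-layer bound), whereas the paper works directly with a per-layer factor of $(N-1)(\lambda-2\eps)$. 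Be careful, however, with the step you flag as ``the most delicate'': the literal inequality your massage would require, namely $\lambda^{L-1}(1-\eps)^{(N-2)(L-3)}\geq(\lambda-2\eps)^{L-1}$, is \emph{false} in the intended parameter regime --- for $\lambda=10$, $N=65$, $L=8$ the left-hand side is roughly $\lambda^{L-1}(1-315\,\eps)$ while the right-hand side is roughly $\lambda^{L-1}(1-1.4\,\eps)$. What saves you is that you do not need that inequality: you only need $\lambda^{L-1}(N-1)^{L-3}(1-\eps)^{(N-2)(L-3)}\geq 10^{16}$ (up to the constant-factor slack that both you and the paper tolerate in the final rounding step), and since $(N-2)(L-3)\,\eps\ll 1$ for any realistic architecture, the factor $(1-\eps)^{(N-2)(L-3)}$ is negligibly close to $1$, so the hypothesis $(L-3)\log_{10}(N-1)+(L-1)\log_{10}(\lambda-2\eps)\geq 16$ together with $(\lambda-2\eps)^{L-1}\leq\lambda^{L-1}$ still delivers the required magnitude. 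With that small repair your argument goes through and coincides with the paper's.
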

\begin{proof}
	It follows from the definition of the NN $\Phi_{a, N,L}$ that $\Realization(\Phi)(x) = x$. Thus, to prove the theorem we have to show that
	\begin{align}\label{eq:ThisIsToShow}
		\Realization_{\mathbb{M}}(\Phi_{a, N,L})(x) = 0
	\end{align}
	for all nonnegative $x$. Let, for $x\geq 0$, $x^{(0)}, \dots, x^{(L)}$ be as in \eqref{eq:NetworkSchemeFinitePrecision}. The construction of $\Phi_{a, N,L}$ implies that $x^{(L)} = 0$ if $(x^{(L-1)})_1 = (x^{(L-1)})_2$. Additionally, $(x^{(L-1)})_1 = (x^{(L-1)})_2$ follows if $(x^{(L-2)})_1 < a (x^{(L-2)})_2 \eps/2$.
	
	As a consequence, we have to find a lower bound of $(x^{(L-2)})_2$. To begin with, we observe that for $x \in \mathbb{M}$ the multiplication with $a$ is exact and thus $(x^{(2)})_k = a x$ for all $k = 2,\dots, N$. Also, we observe that for $j = 2, \dots, L-3$ it holds for $k\in \{2, \dots, n\}$ that
	\begin{align*}
		&(N-1) \, a \max_{i \in \{2, \ldots, N-1\}}(x^{(j-1)})_i + \eps (N-1) \, a \max_{i \in \{2, \ldots, N-1\}}(x^{(j-1)})_i\\
		&\qquad  \geq  (A_j \, x^{(j-1)})_k \\
		&\qquad \geq (N-1) \, a \min_{i \in \{2, \ldots, N-1\}}(x^{(j-1)})_i -  \eps (N-1) \, a \max_{i \in \{2, \ldots, N-1\}}(x^{(j-1)})_i.
	\end{align*}
	This inequality results from the assumption that in the summation of each row of $A_j$ we create $(N-2)$ times an error at most of size $\eps \cdot \max_{i \in \{2, \ldots, N-1\}}(x^{(j-1)})_i$.
	This leads to 
	$$
		(x^{(L-3)})_k \geq a \, (N-1)^{L-4} (a - 2\eps)^{L-4} x
	$$
	and
	$$
		(x^{(L-2)})_2 \geq a \, (N-1)^{L-3} (a - 2\eps)^{L-3} x
	$$
	for all $k \in \{2,\dots, N\}$. Due to the construction of our NN, we also have that $(x^{(L-2)})_1 = x$. 
	
	Thus, the inequality $(x^{(L-2)})_1 < a \, (x^{(L-2)})_2  \, \frac{\eps}{2}$ holds if 
	$$
		\frac{1}{2}a^2 (N-1)^{L-3} (a - 2\eps)^{L-3} > \eps^{-1}. 
	$$
	Note that
\begin{align}
	\label{eq:nonimportentinequality}
	\frac{1}{2}a^2 (N-1)^{L-3} (a - 2\eps)^{L-3} > \frac{1}{2} (N-1)^{L-3} (a - 2\eps)^{L-1}
\end{align}
	and hence, it suffices to show that the right-hand side of \eqref{eq:nonimportentinequality} is bounded below by $\epsilon^{-1} = 2^{53}$. 
	Taking logarithms, implies that this inequality is implied by 
	$$
		(L-3)\log_{2}(N-1) + (L-1)\log_{2}(a-2\eps) - \log_{2}(2)> 53,
	$$
	which completes the proof.
\end{proof}

\subsection{Auxiliary results}

\subsubsection{Proof of Lemma \ref{lem:1}}\label{sec:auxResults}

\begin{proof}
	We denote the affine pieces of $h$ on $[0,c]$ by $(I_{p})_{p = 1}^P$. Here we choose the pieces as semi-open intervals $I_{p} = (r_p, s_p]$ so that they are disjoint. Moreover, we choose $x_p \in I_p$ for $p = 1, \dots, P$.	We claim that 
	\begin{align}\label{eq:theClaim}
		\sum_{p= 1}^P \lambda(h(I_p)) \geq t \lambda(A).
	\end{align}
	To prove this, we consider the auxiliary function 
	\begin{align*}
		\tilde{h} \colon [0,1] &\to A \times \left(\{1, \dots, P\} \cup \{\infty\}\right),\\
		\tilde{h}(x) &\coloneqq \left( h(x),  \#\{z \in [0, x] \colon h(z) = h(x) \} \right).
	\end{align*}
	Note that for $x \in [0,1]$ either $\#\{z \in [0, x] \colon h(z) = h(x) \} \leq P$ or $\#\{z \in [0, x] \colon h(z) = h(x) \} = \infty$, and hence $\tilde{h}$ is well defined. 
	We have by assumption that 
	$$
		\tilde{h}([0,1]) \supset A \times \{1, \dots, t\}.
	$$
	If we equip $A \times (\{1, \dots, P\} \cup \{\infty\})$ with the measure $\lambda \times \nu$, where $\lambda$ is the Lebesgue and $\nu$ is the counting measure, then we conclude by the monotonicity of measures that 
	$$
		\left(\lambda \times \nu\right)\left(\tilde{h}([0,1])\right) \geq \lambda(A) t.
	$$
	It is easy to see that 
	\begin{align}\label{eq:crucialEstimate}
		\lambda(h(I_p)) \leq (\lambda \times \nu)\left(\tilde{h}(I_p)\right) \leq \sup_{\substack{A_1, \dots, A_{P+1}\subset I_p,\\ \text{ measurable}}} \sum_{q= 1}^{P+1} \lambda(h(A_q)),
	\end{align}
	If $h'(x_p) \neq 0$ on $I_p$, then the right-hand side of \eqref{eq:crucialEstimate} equals $\lambda(h(I_p))$. If $h'(x_p) = 0$, then the right-hand side of \eqref{eq:crucialEstimate} is $0$ and by the non-negativity of the Lebesgue measure the left hand-side is $0$ as well. We conclude that 
	\begin{align}
		\label{eq:EqualityOnPieces}
		\lambda(h(I_p)) = (\lambda \times \nu)(\tilde{h}(I_p)).
	\end{align}
	By the subadditivity of measures and \eqref{eq:EqualityOnPieces}, we have that 
	$$
		\lambda(A) t \leq \sum_{p=1}^P (\lambda \times \nu)\left( \tilde{h}(I_p)\right) = \sum_{p=1}^P \lambda (h(I_p)), 
	$$
	which yields \eqref{eq:theClaim}.
	Finally, by standard properties of the Lebesgue measure under Lipschitz maps, we have that 
	$$
	\sum_{p= 1}^P \lambda(h(I_p)) \leq \sum_{p= 1}^P h'(x_p) \lambda(I_p) = \|h'\|_1 \leq c \|h'\|_\infty, 
	$$
	where we used that $I_p$ are disjoint. 
\end{proof}

\subsubsection{Proof of Proposition \ref{prop:gd}}\label{sec:proofprop3}

\begin{proof}

	\textbf{Part 1: (Proof of \eqref{eq:probabilityEstimate})}
	Let $\kappa \subset [0,1]^d$ be a straight line. Recalling equation~\eqref{eq:pertbiasweight}, we observe that
	\begin{align*}
		\widehat{\bold{A}}_j\mathbbm{1}_{\hat{\eta}_{j-1}\geq0}
		\,
		\hat{\eta}_{j-1}(x)
		+
		\widehat{\bold{b}}_j&=\widehat{\bold{A}}_j\mathbbm{1}_{\hat{\eta}_{j-1}\geq 0}
		\,
		\hat{\eta}_{j-1}(x)+(b_j-\lambda \hat{u}_j^b)
		\, .
	\end{align*}
Inserting $\hat{u}_j^b=(I+\varepsilon_j\Theta_j^b)u_j^b$ from equation~\eqref{eq:update} leads to
\begin{align*}
		\widehat{\bold{A}}_j\mathbbm{1}_{\hat{\eta}_{j-1}\geq0}
		\,
		\hat{\eta}_{j-1}(x)+\widehat{\bold{b}}_j&
		=
		\widehat{\bold{A}}_j\mathbbm{1}_{\hat{\eta}_{j-1}\geq0}
		\,
		\hat{\eta}_{j-1}(x)+(\bold{b}_j- \lambda \varepsilon_j\Theta_j u_j^b)\\
		&\eqqcolon\widehat{\bold{A}}_j\mathbbm{1}_{\hat{\eta}_{j-1}\geq0}
		\,
		\hat{\eta}_{j-1}(x)+\bold{b}_j+\theta_{j}.
\end{align*}
	Let us introduce for $k \in \{1, \ldots, N_j\}$ the random variable
	$$
		Q_{\widehat{\bold{A}}_{j}, \hat{\eta}_{j-1}, \theta_{j},k}
		:=
		\#\{
			x \in \kappa \colon
			\big(
				\widehat{\bold{A}}_{j}\mathbbm{1}_{\hat{\eta}_{j-1}\geq 0}
				\,
				\hat{\eta}_{j-1}(x)+\bold{b}_j
				+
				\theta_{j}
			\big)_k=0
		\}
	$$
	and observe that
	$$
		\# \left(\hat{\omega}_{j,k} \setminus \omega_{j,k} \right) \leq  	Q_{\widehat{\bold{A}}_{j}, \hat{\eta}_{j-1}, \theta_{j},k}
	$$
	since the number of breakpoints added after applying the activation function is less or equal than the number of $x \in \kappa$ satisfying $(\widehat{\bold{A}}_{j}\mathbbm{1}_{\hat{\eta}_{j-1}\geq0}\hat{\eta}_{j-1}(x)+\bold{b}_j)_k+(\theta_{j})_k= 0$.
	Per Definition \ref{def:GD}, we have that $\widehat{\bold{A}}_{j}$, $\hat{\eta}_{j-1}$, and $\theta_{j}$ are independent random variables which allows the following computation for $q \in \N$
	\begin{align}
		\mathbb{P}(\# \left(\hat{\omega}_{j,k} \setminus \omega_{j,k} \right)\geq q) &\leq
		\mathbb{P}_{\widehat{\bold{A}}_{j},\theta_{j}, \hat{\eta}_{j-1}}(Q_{\widehat{\bold{A}}_{j},\theta_{j},k}\geq q)\nonumber\\
		&=\mathbb{E}_{\widehat{\bold{A}}_{j}, \hat{\eta}_{j-1}}\mathbb{E}_{\theta_{j}}\mathbbm{1}_{Q_{\widehat{\bold{A}}_{j}, \hat{\eta}_{j-1}, \theta_{j},k} \geq q}\nonumber\\
		&=\mathbb{E}_{\widehat{\bold{A}}_{j}, \hat{\eta}_{j-1}} \mathbb{P}\left(Q_{\widehat{\bold{A}}_{j}, \hat{\eta}_{j-1}, \theta_{j},k} \geq q \right). \label{eq:weWantToApplySomethingToThisToFinishTheProof}
	\end{align}		
	By Assumption \ref{assum:b} we have that for all $x \in \Omega$
	\begin{align}
		1 &\geq |\nabla_x (\eta_j)_k| \nonumber\\
		&= \left|\nabla_x \left(\widehat{\bold{A}}_{j}\mathbbm{1}_{\hat{\eta}_{j-1}\geq 0}\hat{\eta}_{j-1}(x)+\bold{b}_j\right)_k\right|\nonumber\\
		&= \left|\nabla_x \left(\widehat{\bold{A}}_{j}\mathbbm{1}_{\hat{\eta}_{j-1}\geq 0}\hat{\eta}_{j-1}(x)\right)_k\right|. \label{eq:consOfAssumB}
	\end{align}

	We pick $v,  w \in \R^d$ with $\|v\|= 1$ such that 
	$\kappa = \{tv + w \colon  t \in [0, \mathcal{L}(\kappa)] \}$ where $\mathcal{L}(\kappa)$ is the length of $\kappa$. Next, we define 
	\begin{align*}
		h\colon [0, \mathcal{L}(\kappa)] &\to [0,1]^d\\
		t &\mapsto  \left(\widehat{\bold{A}}_{j}\mathbbm{1}_{\hat{\eta}_{j-1}\geq 0}
		\,
		\hat{\eta}_{j-1}(tv + w )\right)_k
		\, .
	\end{align*}
	Then, by the chain rule and \eqref{eq:consOfAssumB}, we have that $\|h'\|_\infty \leq 1$.
	Moreover, 
	\begin{align*}
		Q_{\widehat{\bold{A}}_{j}, \hat{\eta}_{j-1}, \theta_{j},k} = \#\{x \in [0, \mathcal{L}(\kappa)]: h(x) +(\theta_{j})_k=0\}.
	\end{align*}
	Since by construction $(\theta_{j})_k$ is uniformly distributed on the interval $ [-\lambda \varepsilon_j  |(u_j^b)_k|/2, \lambda \varepsilon_j  |(u_j^b)_k|/2]$, we conclude
	from
	Remark \ref{rem:ProbEstimateOnNumberOfPieces} that 
	\begin{align*}
		\mathbb{P}\left(Q_{\widehat{\bold{A}}_{j}, \hat{\eta}_{j-1}, \theta_{j},k} \geq q \right) \leq \frac{2 \mathcal{L}(\kappa)}{\lambda } \cdot \left(\varepsilon_j \, q \, |(u_j^b)_k|\right)^{-1}
	\end{align*}
	applied
	to \eqref{eq:weWantToApplySomethingToThisToFinishTheProof} yields \eqref{eq:probabilityEstimate}.
	
	\textbf{Part 2: (Proof of \eqref{eq:expectationestimate})}
	We recall the following upper bound on the total number of pieces of the realisation of an arbitrary NN:

	\begin{theorem}[\cite{Telgarsky2015RepresentationBO}]\label{thm:1}
		Let $L \in \N$ and $\sigma$ be piecewise affine with $p$ pieces. Then, for every NN $\Phi$ with $d=1, N_L=1$ and $N_1, \ldots, N_{L-1} \leq N$, we have that $\mathrm{R}(\Phi)$ has at most $(pN)^{(L-1)}$ affine pieces.
	\end{theorem}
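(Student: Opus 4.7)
The plan is a layer-wise induction that tracks, for the hidden layers, the maximum number of affine pieces exhibited by any single coordinate of $x^{(j)}$ viewed as a function of the scalar input $x\in\R$ (recall $d=1$). Set
\[
Q_j \coloneqq \max_{k\in\{1,\ldots,N_j\}} \mathrm{pieces}\bigl((x^{(j)})_k,\R\bigr)
\quad\text{for } j=0,1,\ldots,L-1,
\]
with $Q_0=1$ since $x^{(0)}=x$ is affine. The goal is the recursion $Q_j \leq p\,N\,Q_{j-1}$ for $j\geq 2$ together with the sharper starting value $Q_1\leq p$, which exploits that the layer-$1$ preactivation is affine because $d=1$. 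The output $\mathrm{R}(\Phi)=x^{(L)}=A_Lx^{(L-1)}+b_L$ is a scalar linear combination of at most $N_{L-1}\leq N$ piecewise-affine functions, each with at most $Q_{L-1}$ pieces, so a final sum estimate will convert the recursion into the claimed bound.

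The proof rests on two elementary facts that I would record as short auxiliary lemmas. \textbf{(i) Sum bound:} if $f_1,\ldots,f_k\colon\R\to\R$ are piecewise affine with $p_i$ pieces, then $\sum_{i=1}^k c_if_i+c_0$ is piecewise affine with at most $\sum_{i=1}^k p_i$ pieces, since the breakpoints of the sum lie in the union of the individual breakpoint sets. In particular, any coordinate of $A_jx^{(j-1)}+b_j$ is a linear combination of $N_{j-1}\leq N$ functions with at most $Q_{j-1}$ pieces each and therefore has at most $N\,Q_{j-1}$ pieces. \textbf{(ii) Composition bound:} if $\sigma$ is piecewise affine with $p$ pieces and $g\colon\R\to\R$ is piecewise affine with $q$ pieces, then $\sigma\circ g$ has at most $pq$ pieces. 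This is because on each of the $q$ maximal intervals on which $g$ is affine, $\sigma\circ g$ is the restriction of a piecewise affine function to an affine reparametrisation of $\R$, hence has at most $p$ pieces there; summing across the $q$ intervals gives $pq$.

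Combining the two facts yields $Q_j\leq p\,N\,Q_{j-1}$ for every $j\in\{2,\ldots,L-1\}$, whereas for $j=1$ the preactivation $A_1x+b_1$ is globally affine (because $d=1$, so $N_0=1$), and applying $\sigma$ once produces at most $p$ pieces, giving $Q_1\leq p$. Induction then yields
\[
Q_{L-1} \leq p\cdot (pN)^{L-2} = p^{L-1}N^{L-2},
\]
and the final (linear) layer contributes, by the sum bound again, at most $N\cdot Q_{L-1} \leq N\cdot p^{L-1}N^{L-2} = (pN)^{L-1}$ pieces, which matches the stated bound.

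The main obstacle is not any deep estimate but careful bookkeeping: one must be disciplined about where the factor $p$ and where the factor $N$ enters the recursion, in order to land exactly at $(pN)^{L-1}$ rather than at a wastefully larger bound such as $N(pN)^{L-1}$. The key saving is that the layer-$1$ preactivation is affine (thanks to $d=1$), so the first application of $\sigma$ multiplies by $p$ but \emph{not} by $N$; all subsequent layers multiply by $pN$; and the final linear output layer multiplies by $N$ but \emph{not} by $p$. These three contributions exactly balance to $(pN)^{L-1}$.
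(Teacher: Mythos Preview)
Your argument is correct. Note, however, that the paper does not supply its own proof of this statement: it is quoted as a known result from \cite{Telgarsky2015RepresentationBO} and invoked as a black box inside the proof of Proposition~\ref{prop:gd}. Your layer-wise induction, combining the sum bound (breakpoints of a linear combination lie in the union of the individual breakpoint sets) with the composition bound ($\sigma\circ g$ has at most $pq$ pieces), is exactly the standard argument behind Telgarsky's bound, and your careful accounting---saving the factor $N$ at layer~$1$ because $d=1$ makes the first preactivation affine, and saving the factor $p$ at the output because the last layer is linear---is precisely what is needed to land on $(pN)^{L-1}$ rather than a looser estimate.
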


	Invoking Theorem \ref{thm:1} results in $\max_{k}\{\# \left(\hat{\omega}_{j,k} \setminus \omega_{j,k} \right)\} \leq N^{j-1}$ for $j = 1, \dots, L$. We observe that
		\begin{align*}
			\mathbb{E}(\# (\hat{\omega}_{j,k} \setminus \omega_{j,k} ))&=\sum_{q=1}^{N^{j}}\bigg[\mathbb{P}(\# \left(\hat{\omega}_{j,k} \setminus \omega_{j,k} \right) \geq q)-\mathbb{P}(\# \left(\hat{\omega}_{j,k} \setminus \omega_{j,k} \right) \geq q+1)\bigg]q\\
			&=\sum_{q=1}^{N^{j}}\mathbb{P}(\# \left(\hat{\omega}_{j,k} \setminus \omega_{j,k} \right) \geq q)q -\sum_{q=2}^{N^{j}+1}\mathbb{P}(\# \left(\hat{\omega}_{j,k} \setminus \omega_{j,k} \right) \geq q)(q-1)\\
			&=\sum_{q=1}^{N^{j}}\mathbb{P}(\# \left(\hat{\omega}_{j,k} \setminus \omega_{j,k} \right) \geq q)(q-(q-1))
			\, .
	\end{align*}
	Next, we invoke \eqref{eq:probabilityEstimate} and obtain that
	\begin{align*}
		\mathbb{E}(\# (\hat{\omega}_{j,k} \setminus \omega_{j,k} ))&\leq\sum_{q=1}^{N^{j}}\frac{2\mathcal{L}(\kappa)}{\lambda \varepsilon_j |(u_j^b)_k| q}(q-(q-1))\\
			&=\frac{2 \mathcal{L}(\kappa)}{\lambda \varepsilon_j |(u_j^b)_k|} \sum_{q=1}^{N^{j}}\frac{1}{q}\\
			&\leq \frac{2\mathcal{L}(\kappa)}{\lambda \varepsilon_j |(u_j^b)_k|} \left(1+\int_{1}^{N^{j}} \frac{1}{x} dx\right)\\\
			&\leq \frac{2\mathcal{L}(\kappa)}{\lambda \varepsilon_j |(u_j^b)_k|} (1+\ln(N^{j})) \leq \frac{\mathcal{L}(\kappa)}{2 \lambda \varepsilon_j |(u_j^b)_k|} (\ln(e N^{j})) \leq \frac{2\mathcal{L}(\kappa)}{\lambda  \varepsilon_j |(u_j^b)_k|} (\ln(N^{j+1}))
			\, ,
		\end{align*}
		where we used that $e \leq 3 \leq N$. This completes the proof of \eqref{eq:expectationestimate}.
\end{proof}

\subsection{Proof of Theorem \ref{thm:upperBoundOnPieces}}\label{sec:proofOfupperBoundOnPieces}

\begin{proof}

\textbf{Part 1:} We start by proving the result for $j' = L$. Using the notation of Proposition \ref{prop:gd}, we start by assuming that
\begin{align}\label{eq:weWillProveThisAtTheEnd}
	\mathrm{pieces}(\mathrm{R}(\widehat{\Phi}^\varepsilon), \kappa)) \leq 1 + \sum_{j \in [L-1]} \sum_{k \in N_j}  \# \left(\hat{\omega}_{j,k} \setminus \omega_{j,k}\right),
\end{align}
if $\omega_{j,k}$ is chosen as $\omega_{j,k} = \bigcup_{k \in N_{j-1}} \hat{\omega}_{j-1,k}$ for $j > 1$ and $\omega_{1,k} = \emptyset$.

We will prove \eqref{eq:weWillProveThisAtTheEnd} at the end this step of this proof to not distract from the main argument.
By \eqref{eq:weWillProveThisAtTheEnd} and the linearity of the expected value, we obtain, 
$$
	\mathbb{E}(\mathrm{pieces}(\mathrm{R}(\widehat{\Phi}^\varepsilon), \kappa))) \leq 1+ \sum_{j \in [L-1]} \sum_{k \in N_j} \mathbb{E}(\# \left(\hat{\omega}_{j,k} \setminus \omega_{j,k}\right))
	\, .
$$
By Assumption \ref{assum:a}, it holds that if $(u_j^b)_k = 0$, then $\mathrm{R}((\widehat{\Phi}^\varepsilon)_j)_k = 0$ and therefore $\# \left(\hat{\omega}_{j,k} \setminus \omega_{j,k}\right) = 0$. Hence, invoking Proposition \ref{prop:gd}, we conclude, using Assumption \ref{assum:a}, that \begin{align*}
	\mathbb{E}(\mathrm{pieces}(\mathrm{R}(\widehat{\Phi}^\varepsilon), \kappa))) &\leq 1+ 2\mathcal{L}(\kappa) \lambda^{-1}\sum_{j =1 }^{L-1} \sum_{k \in N_j} \frac{1}{\varepsilon_j} |(u_j^b)_k|^{\dagger} \log(N^{j+1})\\
	& \leq 1+  2\mathcal{L}(\kappa) \lambda^{-1}\sum_{j =1 }^{L-1} \sum_{k \in N_j} \frac{j+1}{\varepsilon_j} |(u_j^b)_k|^{\dagger} \log(N)\\
	& \leq 1+  2\mathcal{L}(\kappa) \lambda^{-1} \frac{L}{\hat{\epsilon}_{L}} \sum_{j =1 }^{L-1} \sum_{k \in N_j} |(u_j^b)_k|^{\dagger} \log(N)\\
	& \leq 1+ 2\mathcal{L}(\kappa) \lambda^{-1}  c_0 \frac{L}{\hat{\epsilon}_{L}} N^\nu \log(N)
	\, .
\end{align*}
We complete the proof of the first part by showing \eqref{eq:weWillProveThisAtTheEnd}.
Note that since
\begin{align}\label{eq:douaapsk}
	\mathrm{R}(\widehat{\Phi}^\varepsilon)_{k'}
	=
	\sum_{k \in N_{L-1}} (\widehat{\bold{A}}_{L})_{k'k}
	\,
	\varrho(\mathrm{R}((\widehat{\Phi}^\varepsilon)_{L-1})_k) + (\widehat{\bold{b}}_L)_{k'}
	\, ,
\end{align}
we have that 
\begin{align*}
	\mathrm{pieces}(\mathrm{R}(\widehat{\Phi}^\varepsilon), \kappa)) \leq 1 + \#\bigcup_{k \in N_{L-1}} \hat{\omega}_{L-1,k}
	\, ,
\end{align*}
where we use that a piecewise affine function on a line has exactly one more affine piece than the smallest number of distinct points where the function is not affine. 

We will show by induction over $L$ that 
\begin{align}\label{eq:inductionHypothesis}
	\#\bigcup_{k \in N_{L-1}}\hat{\omega}_{L-1,k} \leq \sum_{j \in [L-1]} \sum_{k \in N_j}  \# \left(\hat{\omega}_{j,k} \setminus \omega_{j,k}\right)
\end{align}
which then yields the claim. For $L = 2$, the claim follows directly by the choice $\omega_{1,k} = \emptyset$.

Now let $L > 2$, then 
\begin{align}
	\bigcup_{k \in N_{L-1}}\hat{\omega}_{L-1,k} &=  \bigcup_{k \in N_{L-1}}\left(\hat{\omega}_{L-1,k} \setminus \omega_{L-1,k}\right) \cup \omega_{L-1,k}\nonumber\\
	& = \omega_{L-1,1} \cup \bigcup_{k \in N_{L-1}}\left(\hat{\omega}_{L-1,k} \setminus \omega_{L-1,k}\right),\label{eq:set-estimate}
\end{align}
since $\omega_{L-1,1} = \omega_{L-1,k}$ for all $k \in N_{L-1}$ by definition. 
We have that 
\begin{align*}
	\mathrm{R}((\widehat{\Phi}^\varepsilon)_{L-1})_{k'}
	=
	\sum_{k \in N_{L-2}} (\widehat{\bold{A}}_{L-1})_{k'k} \, \varrho(\mathrm{R}(\Phi_{L-2})_k) + (\widehat{\bold{b}}_{L-1})_{k'}
	\, ,
\end{align*}
and hence 
\begin{align}\label{eq:inclusionPropertyPfSingularities}
	\omega_{L-1,1} \subset \bigcup_{k \in N_{L-2}} \hat{\omega}_{L-2, k}
	\, .
\end{align}

Using \eqref{eq:inclusionPropertyPfSingularities} and applying the induction hypothesis \eqref{eq:inductionHypothesis} to \eqref{eq:set-estimate}, we obtain that
\begin{align*}
	\#\bigcup_{k \in N_{L-1}}\hat{\omega}_{L-1,k} &\leq \#\omega_{L-1,1}  + \#\bigcup_{k \in N_{L-1}}\left(\hat{\omega}_{L-1,k} \setminus \omega_{L-1,k}\right) \\
	&\leq \#\bigcup_{k \in N_{L-2}} \hat{\omega}_{L-2, k}  + \#\bigcup_{k \in N_{L-1}}\left(\hat{\omega}_{L-1,k} \setminus \omega_{L-1,k}\right) \\
	& \leq\sum_{j \in [L-2]} \sum_{k \in N_j}  \# \left(\hat{\omega}_{j,k} \setminus \omega_{j,k}\right) + \sum_{k \in N_{L-1}} \# \left(\hat{\omega}_{L-1,k} \setminus \omega_{L-1,k}\right)
	\, ,
\end{align*}
which yields \eqref{eq:inductionHypothesis}.

\textbf{Part 2:} Let $j'>0$. We have that 
\begin{align*}
	\mathrm{R}(\widehat{\Phi}^\varepsilon) =  \mathrm{R}(\widehat{\Phi}^{\varepsilon, c}_{j'}) \circ \varrho \circ \mathrm{R}(\widehat{\Phi}^\varepsilon_{j'}),
\end{align*}
where 
\begin{align}
	\widehat{\Phi}^{\varepsilon, c}_{j'} = \big((\widehat{\bold{A}}_{j'+1}, \widehat{\bold{b}}_{j'+1}), \dots, (\widehat{\bold{A}}_L, \widehat{\bold{b}}_L)\big).
\end{align}
For $q, p \in \N$ and two functions $f: \kappa \to \R^q$ and $g :\R^q \to \R^p$ such that $f$ has $r \in \N$ affine pieces on $\kappa$, and $g$ has at most $s\in \N$ many pieces along all possible lines it holds that $g \circ f \colon \kappa \to \R^q$ has at most $rs$ affine pieces, since on the image of each affine piece of $f$ no more than $s$ affine pieces can be generated by $g$. Note that $\varrho \colon \R^{N_{j'}} \to \R^{N_{j'}}$ has at most $2N$, pieces along each line, since every coordinate of a line can change sign at most once. Invoking again Proposition \ref{thm:1} as well as Part 1 of this proof and the monotonicity of the expected value therefore yields that 
\begin{align*}
 \mathbb{E}(\mathrm{pieces}(\mathrm{R}(\widehat{\Phi}^\varepsilon), \kappa)) &\leq  (2N)^{L-j'-1} \cdot 2 N_{j'} \cdot \left( 1+ 2\mathcal{L}(\kappa) \lambda^{-1}  c_0 \, \frac{j'}{\hat{\epsilon}_{j'}} N^\nu \log(N)\right)\\
 	& \leq (2N)^{L-j'} \cdot \left( 1+ 2\mathcal{L}(\kappa) \lambda^{-1} c_0 \, \frac{j'}{\hat{\epsilon}_{j'}} N^\nu \log(N)\right), 
\end{align*}
which completes the proof.
\end{proof}

\end{document}